\newtheorem{theorem}{Theorem}
\newtheorem{proposition}{Proposition}
\newcommand{\x}{{\mathbf x}}
\newcommand{\uu}{{\mathbf u}}
\newcommand{\h}{{\mathbf h}}
\newcommand{\m}{{\mathbf m}}
\newcommand{\e}{{\mathbf e}}
\newcommand{\W}{{\mathbf W}}
\newcommand{\A}{{\mathbf A}}
\newcommand{\B}{{\mathbf B}}
\newcommand{\T}{{\mathbf T}}
\newcommand{\X}{{\mathbf X}}
\newcommand{\Y}{{\mathbf Y}}
\newcommand{\Q}{{\mathbf Q}}
\newcommand{\RR}{{\mathbf R}}
\newcommand{\q}{{\mathbf q}}
\newcommand{\Was}{{\mathcal W}}
\newcommand{\printfnsymbol}[1]{%
  \textsuperscript{\@fnsymbol{#1}}%
}
\title{Optimal Transport Graph Neural Networks}
\author{
    Benson Chen \textsuperscript{\rm 1}\thanks{Equal contribution. Correspondence to:
Benson Chen \textless bensonc@csail.mit.edu \textgreater and Octavian Ganea \textless oct@csail.mit.edu\textgreater.}, Gary B{\'e}cigneul \textsuperscript{\rm 1}\printfnsymbol{1}, Octavian-Eugen Ganea \textsuperscript{\rm 1}\printfnsymbol{1}, Regina Barzilay\textsuperscript{\rm 1}, Tommi Jaakkola\textsuperscript{\rm 1}
}
\begin{document}

\maketitle

\begin{abstract}
Current graph neural network (GNN) architectures naively average or sum node embeddings into an aggregated graph representation---potentially losing structural or semantic information. We here introduce \textsc{OT-GNN}, a model that computes graph embeddings using parametric prototypes that highlight key facets of different graph aspects. Towards this goal, we  successfully combine optimal transport (OT) with parametric graph models. Graph representations are obtained from Wasserstein distances between the set of GNN node embeddings and ``prototype'' point clouds as free parameters. We theoretically prove that, unlike traditional sum aggregation, our function class on point clouds satisfies a fundamental universal approximation theorem. Empirically, we address an inherent collapse optimization issue by proposing a noise contrastive regularizer to steer the model towards truly exploiting the OT geometry. Finally, we outperform popular methods on several molecular property prediction tasks, while exhibiting smoother graph representations.
\end{abstract}

\section{Introduction} 
Recently, there has been considerable interest in developing learning algorithms for structured data such as graphs. For example, molecular property prediction has many applications in chemistry and drug discovery \citep{yang2019analyzing,vamathevan2019applications}. Historically, graphs were decomposed into features such as molecular fingerprints, or turned into non-parametric graph kernels \citep{vishwanathan2010graph,shervashidze2011weisfeiler}. More recently, learned  representations via graph neural networks (GNNs) have achieved state-of-the-art on graph prediction tasks \citep{duvenaud2015convolutional,NIPS2016_6081,kipf2016semi,yang2019analyzing}.

Despite these successes, GNNs are often underutilized in whole graph prediction tasks such as molecule property prediction. Specifically, GNN node embeddings are typically aggregated via simple operations such as a sum or average, turning the molecule into a single vector prior to classification or regression. As a result, some of the information naturally extracted by node embeddings may be lost.  

Departing from this simple aggregation step, \cite{togninalli2019wasserstein} proposed a kernel function over graphs by directly comparing non-parametric node embeddings as point clouds through optimal transport (Wasserstein distance). Their \textit{non-parametric} model yields better empirical performance over popular graph kernels, but this idea hasn't been extended to the more challenging parametric case where optimization difficulties have to be reconciled with the combinatorial aspects of OT solvers.

\begin{figure}
  \centering
  \includegraphics[width=0.50\textwidth]{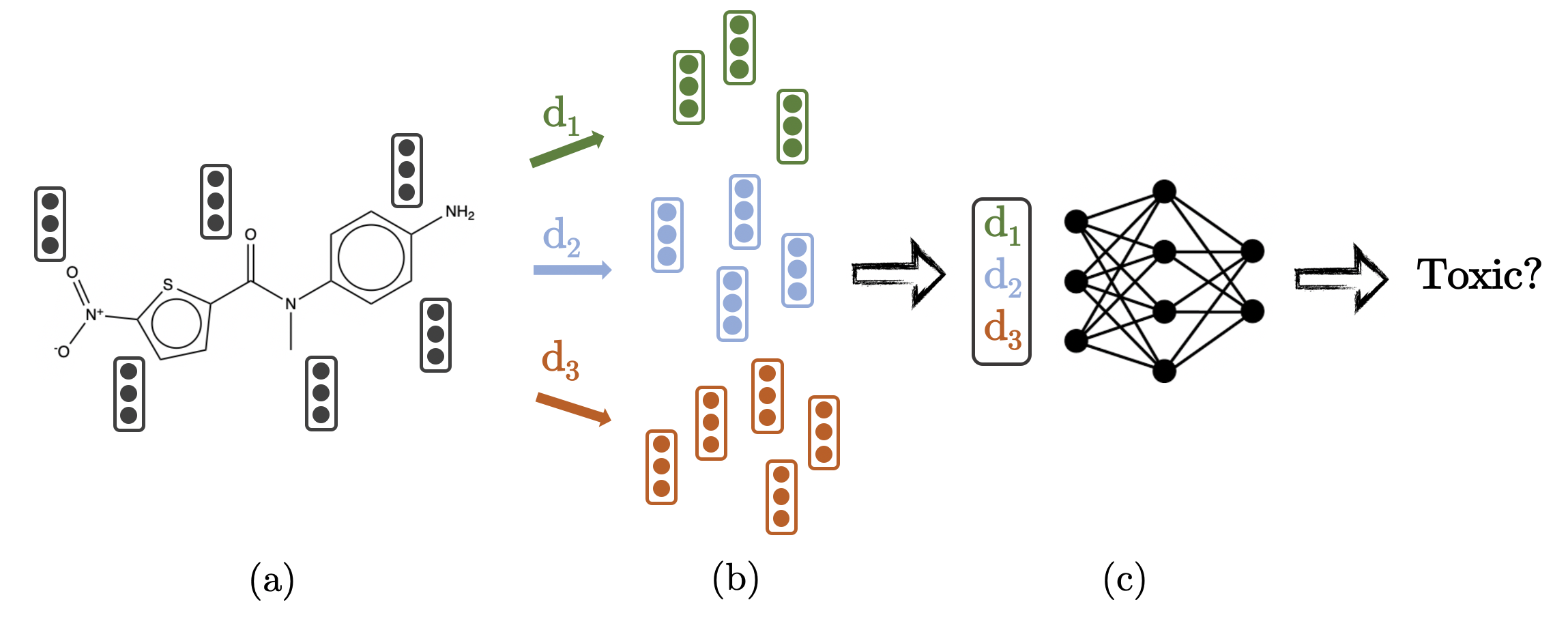}
  \caption{Our OT-GNN prototype model computes graph embeddings  from Wasserstein distances between (a) the set of GNN node embeddings and (b) prototype embedding sets. These distances are then used as the molecular representation (c) for supervised tasks, e.g. property prediction. We assume that a few prototypes, e.g. some functional groups, highlight key facets or structural features of graphs relevant to a particular downstream task at hand. We express graphs by relating them to these abstract prototypes represented as free point cloud parameters.}
  \label{fig:ot_protos}
\end{figure}

Motivated by these observations and drawing inspiration from prior work on prototype learning, we  introduce a new class of GNNs where the key representational step consists of comparing each input graph to a set of abstract prototypes (\cref{fig:ot_protos}). Our desire is to learn prototypical graphs and represent data by some form of distance (OT based) to these prototypical graphs; however, for the OT distance computation it suffices to directly learn the point cloud that represents each prototype, so learning a graph structure (which would be difficult) is not necessary. In short, these prototypes play the role of basis functions and are stored as point clouds as if they were encoded from real graphs.  Each input graph is first encoded into a set of node embeddings using any existing GNN architecture. The resulting embedding point cloud is then compared to the prototype embedding sets, where the distance between two point clouds is measured by their Wasserstein distance.  The prototypes as abstract basis functions can be understood as keys that highlight property values associated with different graph structural features. In contrast to previous kernel methods, the prototypes are learned together with the GNN parameters in an end-to-end manner. 

Our notion of prototypes is inspired from the vast prior work on prototype learning (see \cref{sec:rel}). In our case, prototypes are not required to be the mean of a cluster of data, but instead they are entities living in the data embedding space that capture helpful information for the task under consideration. The closest analogy are the centers of radial basis function networks \citep{chen1991orthogonal,poggio1990networks}, but we also inspire from learning vector quantization approaches \citep{kohonen1995learning} and prototypical networks \citep{snell2017prototypical}.

Our model improves upon traditional aggregation by explicitly tapping into the full set of node embeddings without collapsing them first to a single vector. We theoretically prove that, unlike  standard GNN aggregation, our model defines a class of set functions that is a universal approximator.

Introducing prototype points clouds as free parameters trained using combinatorial optimal transport solvers creates a challenging optimization problem. Indeed, as the models are trained end-to-end, the primary signal is initially available only in aggregate form. If trained as is, the prototypes often collapse to single points, reducing the Wasserstein distance between point clouds to Euclidean comparisons of their means. To counter this effect, we introduce a contrastive regularizer which effectively prevents the model from collapsing, and we demonstrate its merits empirically. 

\textbf{Our contributions.} First, we introduce an efficiently trainable class of graph neural networks enhanced with OT primitives for computing graph representations based on relations with abstract prototypes. Second, we train parametric graph models together with combinatorial OT distances, despite optimization difficulties. A key element is our noise contrastive regularizer that prevents the model from collapsing back to standard summation, thus fully exploiting the OT geometry. Third, we provide a theoretical justification of the increased representational power compared to the standard GNN aggregation method. Finally, our model shows consistent empirical improvements over previous state-of-the-art on molecular datasets, yielding also smoother graph embedding spaces.

\section{Preliminaries} \label{sec:prelim}
\subsection{Directed Message Passing Neural Networks (D-MPNN)} \label{sec:dmpnn}
We briefly remind here of the simplified D-MPNN \citep{dai2016discriminative} architecture which was  adapted for state-of-the-art molecular property prediction by \cite{yang2019analyzing}. This model takes as input a directed graph $G=(V,E)$, with node and edge features denoted by $\x_v$ and $\e_{vw}$ respectively, for $v$, $w$ in the vertex set $V$ and  $v\to w$ in the edge set $E$. The parameters of D-MPNN are the matrices $\{\W_i, \W_m, \W_o\}$. It keeps track of \textit{messages} $\m_{vw}^t$ and \textit{hidden states} $\h_{vw}^t$ for each step $t$, defined as follows. An initial hidden state is set to $\h_{vw}^0:= ReLU(\W_i\mathrm{cat}(\x_v,\e_{vw}))$ where ``$\mathrm{cat}$'' denotes concatenation. Then, the updates are:
\begin{equation}
\begin{split}
\m_{vw}^{t+1}=\sum_{k\in N(v)\setminus\{w\}} \h_{kv}^t, \quad\quad \h_{vw}^{t+1}=ReLU(\h_{vw}^0+\W_m \m_{vw}^{t+1})
\end{split}
\end{equation}
where $N(v)=\{k\in V|(k,v)\in E\}$ denotes $v$'s incoming neighbors. After $T$ steps of message passing, node embeddings are obtained by summing edge embeddings:
\begin{equation}
\m_v = \sum_{w\in N(v)} \h^T_{vw}, \quad \h_v = ReLU(\W_o\mathrm{cat}(\x_v,\m_v)).
\end{equation}
A final graph embedding is then obtained as $\h=\sum_{v\in V}\h_v$, which is usually fed to a multilayer perceptron (MLP) for classification or regression.

\begin{figure}[h]
    \centering
    \includegraphics[width=0.5\textwidth]{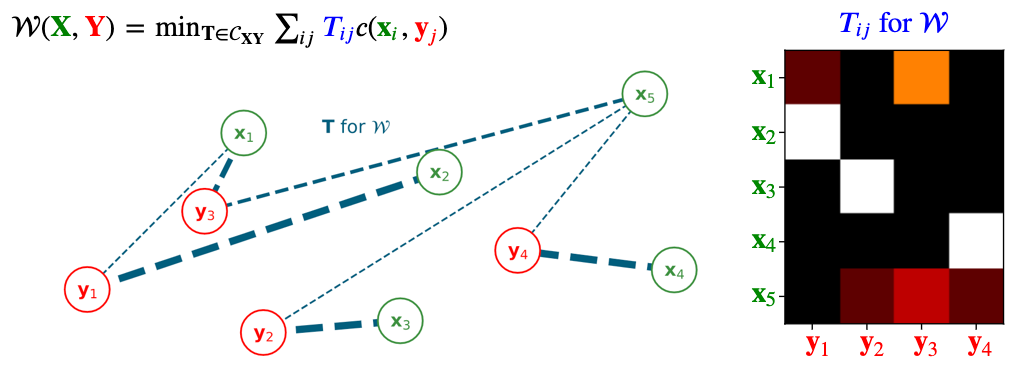}
    \caption{We illustrate, for a given 2D point cloud, the optimal transport plan obtained from minimizing the Wasserstein costs;  $c(\cdot,\cdot)$ denotes the Euclidean distance.
  A higher dotted-line thickness illustrates a greater mass transport.}
    \label{fig:ot_dist}
\end{figure}

\subsection{Optimal Transport Geometry} 

Optimal Transport \citep{peyre2019computational} is a mathematical framework that defines distances or similarities between objects such as probability distributions, either discrete or continuous, as the cost of an optimal transport plan from one to the other. 

\textbf{Wasserstein distance for point clouds. } Let a \textit{point cloud $\X=\{\x_i\}_{i=1}^n$ of size $n$} be a set of $n$ points $\x_i\in\mathbb{R}^d$. Given point clouds $\X,\Y$ of respective sizes $n,m$, a \textbf{\textit{transport plan}} (or \textbf{\textit{coupling}}) is a matrix $\T$ of size $n\times m$ with entries in $[0,1]$, satisfying the two following \textit{marginal constraints}: $\T \mathbf{1}_m=\frac{1}{n}\mathbf{1}_n$ and $\T^T \mathbf{1}_n=\frac{1}{m}\mathbf{1}_m$. Intuitively, the marginal constraints mean that $\T$ preserves the mass from $\X$ to $\Y$. We denote the set of such couplings as $\mathcal{C}_{\X\Y}$.

Given a cost function $c$ on $\mathbb{R}^d \times \mathbb{R}^d$, its associated \textbf{\textit{Wasserstein discrepancy}} is defined as
\begin{equation}
\Was(\X,\Y) = \min_{\T\in\mathcal{C}_{\X\Y}} \sum_{ij}T_{ij}c(\x_i,\mathbf{y}_j).
\end{equation}

\section{Model \& Practice}\label{sec:method}
\subsection{Architecture Enhancement}\label{sec:arch}

\textbf{Reformulating standard architectures. } The final graph embedding $\h=\sum_{v\in V}\h_v$ obtained by aggregating node embeddings is usually fed to a  multilayer perceptron (MLP) performing a matrix-multiplication whose i-th component is $(\RR \h)_i = \langle \mathbf{r}_i,\h\rangle$, where $\mathbf{r}_i$ is the i-th row of matrix $\RR$. Replacing $\langle\cdot,\cdot\rangle$ by a distance/kernel $k(\cdot,\cdot)$ allows the processing of more general graph representations than just vectors in $\mathbb{R}^d$, such as point clouds or adjacency tensors. 

\textbf{From a single point to a point cloud. } We propose to replace the aggregated graph embedding $\h=\sum_{v\in V}\h_v$ by the point cloud (of unaggregated node embeddings) $\mathbf{H}=\{\h_v\}_{v\in V}$, and the inner-products $\langle \h,\mathbf{r}_i\rangle$ by the below written \textbf{\textit{Wasserstein discrepancy}}:
\begin{equation}
\Was(\mathbf{H}, \Q_i):=\min_{\T\in\mathcal{C}_{\mathbf{H} \Q_i}}\sum_{vj}T_{vj}c( \h_v, \q_i^j),
\label{eq:wkernel}
\end{equation}
where $\Q_i=\{\q_i^j\}_{j \in \{1,\ldots,N\}}, \forall i \in \{1,\ldots,M\}$ represent $M$ prototype point clouds each being represented as a set of $N$ embeddings as free trainable parameters, and the cost is chosen as $c=\Vert\cdot - \cdot\Vert_2^2$ or $c = - \langle \cdot,\cdot\rangle$. Note that both options yield identical optimal transport plans.

\textbf{Greater representational power. } We formulate mathematically that this kernel has a strictly greater representational power than the kernel corresponding to standard inner-product on top of a sum aggregation, to distinguish between different point clouds. 

\textbf{Final architecture. } Finally, the vector of all Wasserstein distances in \cref{eq:wkernel} becomes the input to a final MLP with a single scalar as output. This can then be used as the prediction for various downstream tasks, depicted in \cref{fig:ot_protos}.

\begin{figure*}[h]
  \subfigure[No regularization]{\includegraphics[scale=0.40]{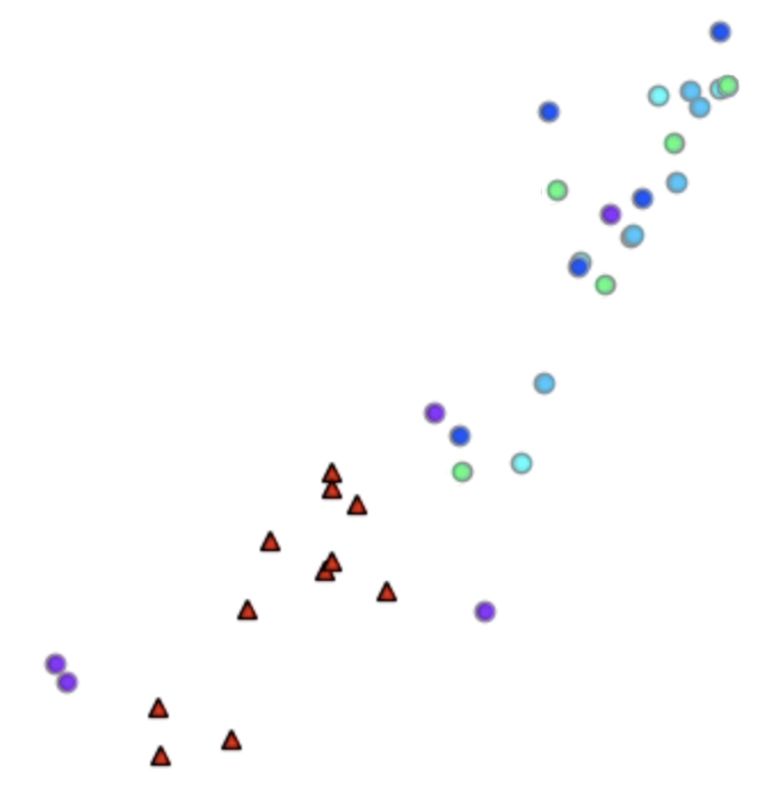}} \hspace{15mm}
  \subfigure[Using regularization ($0.1$)]{\includegraphics[scale=0.40]{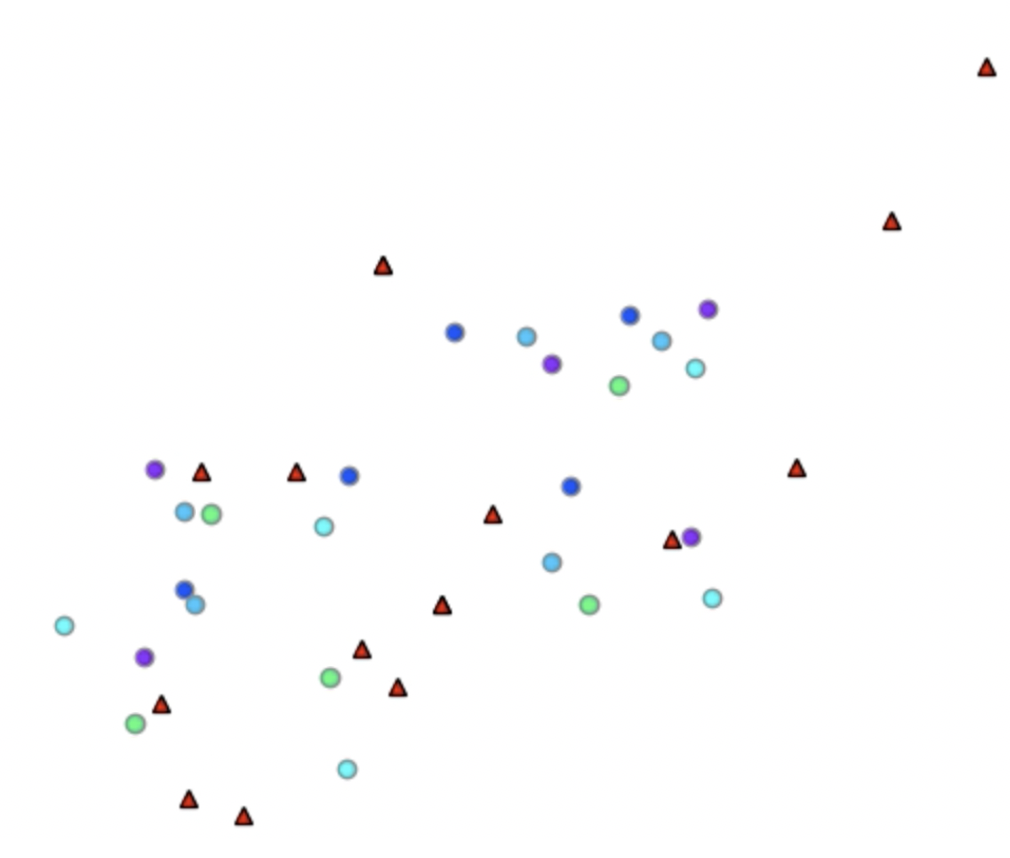}} \hspace{15mm}
  \subfigure{\includegraphics[scale=0.30]{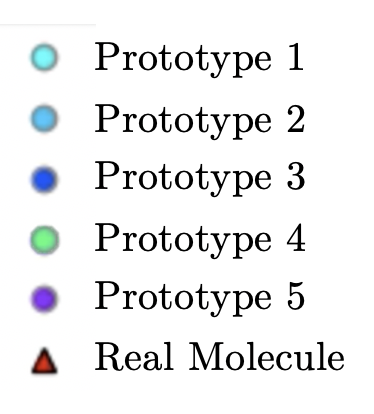}}
  \caption{2D embeddings of prototypes and of a real molecule with and without contrastive regularization for same random seed runs on the ESOL dataset. Both prototypes and real molecule point clouds tend to cluster when no regularization is used (left). For instance, the real molecule point cloud (red triangle) is much more dispersed when regularization is applied (right) which is desirable in order to interact with as many embeddings of each prototype as possible.}
  \label{fig:nce_embeddings}
\end{figure*}

\vspace{-0.2cm}
\subsection{Contrastive Regularization}\label{sec:reg}

What would happen to $\Was(\mathbf{H},\Q_i)$ if all points $\q_i^j$ belonging to point cloud $\Q_i$ would collapse to the same point $\q_i$? All transport plans would yield the same cost, giving for $c=-\langle\cdot,\cdot\rangle$:
\begin{equation}\label{eq:collapsed}
    \Was(\mathbf{H},\Q_i) = -\sum_{vj}T_{vj}\langle \h_v ,\q_i^j \rangle = -\langle\h,\q_i /|V| \rangle.
\end{equation}
In this scenario, our proposition would simply over-parametrize the standard Euclidean model.\\

\noindent\textbf{A first obstacle and its cause.} Empirically, OT-enhanced GNNs with only the Wasserstein component sometimes perform similarly to the Euclidean baseline in both train and validation settings, in spite of its greater representational power. Further investigation revealed that the Wasserstein model would naturally displace the points in each of its prototype point clouds in such a way that the optimal transport plan $\T$ obtained by maximizing $\sum_{vj}T_{vj}\langle \h_v ,\q_i^j \rangle$ was not \textit{discriminative}, \textit{i.e.} many other transports would yield a similar Wasserstein cost. Indeed, as shown in \cref{eq:collapsed}, if each point cloud collapses to its mean, then the Wasserstein geometry collaspses to Euclidean geometry. In this scenario, any transport plan yields the same Wasserstein cost. However, partial or local collapses are also possible and would still result in non-discriminative transport plans, also being undesirable.

Intuitively, the existence of multiple optimal transport plans implies that the same prototype can be representative for distinct parts of the molecule. However, we desire that different prototypes disentangle different factors of variation such as different functional groups.

\textbf{Contrastive regularization. } To address this difficulty, we add a regularizer which encourages the model to displace its prototype point clouds such that the optimal transport plans would be discriminative against chosen \textit{contrastive transport plans}. Namely, consider a point cloud $\Y$ of node embeddings and let $\T^{i}$ be an optimal transport plan obtained in the computation of $\Was(\Y,\Q_i)$. For each $\T^{i}$, we then build a set $Neg(\T^i)\subset \mathcal{C}_{\Y\Q_i}$ of \textit{noisy/contrastive} transports. If we denote by $\Was_{\T}(\X,\Y):= \sum_{kl}T_{kl}c(\x_k,\mathbf{y}_l)$ the Wasserstein cost obtained for the particular transport $\T$, then our contrastive regularization consists in maximizing the term:
\begin{equation}\label{eq:nce}
   \sum_i \log\left(\dfrac{e^{ -\Was_{\T^i}(\Y,\Q_i) }}{e^{ -\Was_{\T^i}(\Y,\Q_i) }+\sum_{\T\in Neg(\T^i)} e^{ - \Was_\T(\Y,\Q_i)}}\right),
\end{equation}
which can be interpreted as the log-likelihood that the correct transport $\T_i$ be (as it should) a better minimizer of $\Was_\T(\Y,\Q_i)$ than its negative samples. This can be considered as an approximation of $\log(\mathrm{Pr}(\T_i\mid \Y,\Q_i))$, where the partition function is approximated by our selection of negative examples, as done e.g. by \cite{nickel2017poincare}. Its effect is shown in \cref{fig:nce_embeddings}.

\textbf{Remarks. } The selection of negative examples should reflect the trade-off: \textit{(i)} not be too large, for computational efficiency while \textit{(ii)} containing sufficiently meaningful and challenging contrastive samples. Details about our choice of contrastive samples are in the experiments section. Note that replacing the set $Neg(\T^i)$ with a singleton $\{\T\}$ for a contrastive random variable $\T$ lets us rewrite (\cref{eq:nce}) as\footnote{where $\sigma(\cdot)$ 
 is the sigmoid function.} $\sum_i\log\sigma(\Was_\T-\Was_{\T^i})$, reminiscent of noise contrastive estimation \citep{gutmann2010noise}.

\subsection{Optimization \& Complexity}\label{sec:opt}
Backpropagating gradients through optimal transport (OT) has been the subject of recent research investigations: \cite{genevay2017learning} explain how to unroll and differentiate through the Sinkhorn procedure solving OT, which was extended by \cite{schmitz2018wasserstein} to Wasserstein barycenters. However, more recently,  \cite{xu2019gromovfactor} proposed to simply invoke the envelop theorem \citep{afriat1971theory} to support the idea of keeping the optimal transport plan fixed during the back-propagation of gradients through Wasserstein distances. \textit{For the sake of simplicity and training stability, we resort to the latter procedure: keeping $\T$ fixed during back-propagation.} We discuss complexity in \cref{apx:complexities}. 

\section{Theoretical Analysis} \label{sec:theory}

In this section we show that the standard architecture lacks a fundamental property of \textit{universal approximation} of functions defined on point clouds, and that our proposed architecture recovers this property. We will denote by $\mathcal{X}_d^n$ the set of point clouds $\X=\{\x_i\}_{i=1}^n$ of size $n$ in $\mathbb{R}^d$.

\subsection{Universality}\label{ssec:univ}

As seen in \cref{sec:arch}, we have replaced the sum aggregation $-$ followed by the Euclidean inner-product $-$ by Wasserstein discrepancies. How does this affect the function class and representations?

A common framework used to analyze the geometry inherited from similarities and discrepancies is that of kernel theory. A kernel $k$ on a set $\mathcal{X}$ is a symmetric function $k:\mathcal{X}\times\mathcal{X}\to\mathbb{R}$, which can either measure similarities or discrepancies. An important property of a given kernel on a space $\mathcal{X}$ is whether simple functions defined on top of this kernel can approximate any continuous function on the same space. This is called \textit{universality}: a crucial property to regress unknown target functions.

\noindent\textbf{Universal kernels.} A kernel $k$ defined on $\mathcal{X}_d^n$ is said to be \textbf{\textit{universal}} if the following holds: for any compact subset $\mathcal{X}\subset\mathcal{X}_d^n$, the set of functions in the form\footnote{For $m\in\mathbb{N}^*$, $\alpha_j\beta_j\in\mathbb{R}$ and $\theta_j\in\mathcal{X}_d^n$.} $\sum_{j=1}^m \alpha_j \sigma(k(\cdot,\theta_j) + \beta_j)$ is dense in the set $\mathcal{C}(\mathcal{X})$ of continuous functions from $\mathcal{X}$ to $\mathbb{R}$, w.r.t the sup norm $\Vert\cdot\Vert_{\infty,\mathcal{X}}$, $\sigma$ denoting the sigmoid.

Although the notion of universality does not indicate how easy it is in practice to learn the correct function, it at least guarantees the absence of a fundamental bottleneck of the model using this kernel.

In the following we compare the aggregating kernel $\mathfrak{agg}(\X,\Y):=\langle \sum_i \x_i  , \sum_j \mathbf{y}_j \rangle$ (used by popular GNN models) with the Wasserstein kernels defined as 
 \begin{equation}
     \Was_{\mathrm{L2}}(\X,\mathbf Y) := \min_{\T\in \mathcal{C}_{\X\mathbf Y}}\sum_{ij} T_{ij}\Vert \x_i-\mathbf y_j\Vert_2^2
\end{equation}

\begin{equation}
     \Was_{\mathrm{dot}}(\X,\mathbf Y) := \max_{\T\in\mathcal{C}_{\X\mathbf Y}}\sum_{ij} T_{ij}\langle \x_i,\mathbf y_j\rangle.
 \end{equation}

\begin{theorem} 
\label{thm:univ}
We have that: i) the Wasserstein kernel $\Was_{\mathrm{L2}}$ \textbf{is universal}, while ii) the aggregation kernel $\mathfrak{agg}$ \textbf{is not universal}. 
\end{theorem}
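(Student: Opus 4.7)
\textbf{Plan for (ii), non-universality of $\mathfrak{agg}$.} Observe that $\mathfrak{agg}(\X, \theta) = \langle \sum_i \x_i, \sum_j \theta_j\rangle$ depends on $\X$ only through the aggregated vector $\sum_i \x_i$, so every function in the candidate class $\sum_j \alpha_j \sigma(\mathfrak{agg}(\cdot, \theta_j) + \beta_j)$ factors through this aggregate. I would exhibit two distinct multisets with identical sum---e.g.\ for $n = 2$, $\X = \{\mathbf 0, 2v\}$ versus $\X' = \{v, v\}$ with $v \neq 0$---place $\{\X, \X'\}$ inside a compact $\mathcal{X} \subset \mathcal{X}_d^n$, and invoke Urysohn on the quotient metric space to produce a continuous $g \in \mathcal{C}(\mathcal{X})$ with $g(\X) \neq g(\X')$. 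Any candidate approximator $f$ is forced to satisfy $f(\X) = f(\X')$, so $\|g - f\|_\infty \geq \tfrac{1}{2}|g(\X) - g(\X')| > 0$ is unavoidable.

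\textbf{Plan for (i), universality of $\Was_{\mathrm{L2}}$.} My strategy is to combine a metric/point-separation fact about $\Was_{\mathrm{L2}}$ with a Cybenko-style sigmoidal density argument run through the feature map. First, I would verify that $\Was_{\mathrm{L2}}$ descends to a continuous, point-separating metric on the quotient $\mathcal{X}_d^n = (\mathbb{R}^d)^n / S_n$: symmetry and triangle are standard optimal-transport facts, while nondegeneracy $\Was_{\mathrm{L2}}(\X, \Y) = 0 \iff \X = \Y$ (as multisets) is the textbook result for equal-mass empirical measures under the strictly convex cost $\|\cdot\|_2^2$. Hence $\{\Was_{\mathrm{L2}}(\cdot, \theta) : \theta \in \mathcal{X}_d^n\}$ separates points of any compact $\mathcal{X}$ (just take $\theta = \X$). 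Second, apply Stone--Weierstrass to the separating subalgebra generated by these features together with constants: for any $f \in \mathcal{C}(\mathcal{X})$ and $\varepsilon > 0$, there exist finitely many prototypes $\theta_1, \ldots, \theta_N$ and a polynomial $\tilde f: \mathbb{R}^N \to \mathbb{R}$ with $|f(\X) - \tilde f(\Was_{\mathrm{L2}}(\X, \theta_1), \ldots, \Was_{\mathrm{L2}}(\X, \theta_N))| < \varepsilon$ uniformly on $\mathcal{X}$. Third, approximate $\tilde f$ on the compact image $\Phi(\mathcal{X}) \subset \mathbb{R}^N$ (with $\Phi = (\Was_{\mathrm{L2}}(\cdot, \theta_i))_i$) via the classical single-hidden-layer sigmoidal universal approximation theorem, yielding an approximator of the form $\sum_j \alpha_j \sigma(\langle w_j, \Phi(\cdot)\rangle + \beta_j)$.

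\textbf{Main obstacle.} The real technical hurdle is the mismatch between the classical sigmoidal approximator---which allows an \emph{arbitrary linear combination} of features $\sum_k w_{jk} \Was_{\mathrm{L2}}(\cdot, \theta_k)$ inside each sigmoid---and the paper's prescribed form $\sum_j \alpha_j \sigma(\Was_{\mathrm{L2}}(\cdot, \tilde\theta_j) + \tilde\beta_j)$, which insists on a \emph{single feature} per hidden unit. Closing this gap requires exploiting the infinite-dimensional parametrization of the prototype space. One route is to leverage the decomposition $\Was_{\mathrm{L2}}(\X, \theta) = \tfrac{1}{n}\sum_i \|\x_i\|^2 + \tfrac{1}{|\theta|}\sum_j \|\theta_j\|^2 - 2\,\Was_{\mathrm{dot}}(\X, \theta)$ to absorb the Cybenko weights $w_j$ into new prototypes $\tilde\theta_j$, controlling the approximation error from the nonlinear $\Was_{\mathrm{dot}}$ part; alternatively, one could dualize via Hahn--Banach so that density becomes the statement that no nonzero finite signed Borel measure $\mu$ on $\mathcal{X}$ satisfies $\int \sigma(\Was_{\mathrm{L2}}(\cdot, \theta) + \beta)\, d\mu = 0$ for all $\theta, \beta$, then leverage the full infinite-dimensional prototype family together with the discriminatory property of $\sigma$ to force $\mu = 0$. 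Carefully executing either route is the main content of the proof.
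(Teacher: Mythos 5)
Your part (ii) is the paper's argument: the candidate class factors through $\sum_i\x_i$, so any two point clouds with the same sum are indistinguishable by it, and a continuous function taking different values on them cannot be approximated. Identical reasoning.

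For part (i), your ``Route~2'' (Hahn--Banach duality, then show $\sigma$ is discriminatory for $\Was_{\mathrm{L2}}$) is exactly the paper's strategy, and you correctly flag that $\sqrt{\Was_{\mathrm{L2}}}$ being a point-separating metric is the key structural input. But your plan stops at precisely the point where the content lies: you never say \emph{how} the hypothesis $\int\sigma(\Was_{\mathrm{L2}}(\cdot,\theta)+\beta)\,d\mu=0$ for all $(\theta,\beta)$ forces $\mu\equiv0$. The paper's discriminatory lemma does this by sending $\lambda\to+\infty$ in $\sigma(\lambda(\Was_{\mathrm{L2}}(\X,\Y)-\theta)+\varphi)$: by Lebesgue bounded convergence the integral condition collapses to $\mu(B_{\Y,r})=0$ for every $\sqrt{\Was_{\mathrm{L2}}}$-metric ball, after which one needs the Hahn decomposition of $\mu$ together with a nontrivial measure-theoretic fact --- two finite positive Borel measures on a finite-dimensional metric space agreeing on all balls are equal \citep{hoffmann1976measures} --- to conclude $\mu\equiv0$. ``Leveraging the full infinite-dimensional prototype family'' does not supply these two steps, and they are the proof. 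Separately, your ``Route~1'' does not close the gap you yourself identify: a Cybenko hidden unit applies $\sigma$ to a linear combination $\sum_k w_k\Was_{\mathrm{L2}}(\cdot,\theta_k)$, and there is no single prototype $\tilde\theta$ for which $\Was_{\mathrm{L2}}(\cdot,\tilde\theta)$ reproduces such a sum (each summand is optimized over its own transport plan, and even one rescaled term $w_k\Was_{\mathrm{L2}}(\X,\theta_k)$ carries a $w_k\cdot\tfrac1n\sum_i\Vert\x_i\Vert_2^2$ contribution that cannot be absorbed into the prototype). Route~1 should be discarded; Route~2 is the right road but is missing its two central lemmas.
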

\textit{Proof:} See \cref{sec:proof-univ}. Universality of the $\Was_{\mathrm{L2}}$ kernel comes from the fact that its square-root defines a metric, and from the axiom of separation of distances: \textit{if $d(x,y)=0$ then $x=y$.}

\textbf{Implications. } Theorem \ref{thm:univ} states that our proposed OT-GNN model is strictly more powerful than GNN models with summation or averaging pooling. Nevertheless, this implies we can only distinguish graphs that have distinct multi-sets of node embeddings, e.g. all Weisfeiler-Lehman distinguishable graphs in the case of GNNs. In practice, the shape of the aforementioned functions having universal approximation capabilities gives an indication of how one should leverage the vector of Wasserstein distances to prototypes to perform graph classification -- e.g. using a MLP on top like we do.

\subsection{Definiteness}

For the sake of simplified mathematical analysis, similarity kernels are often required to be \textit{positive definite} (p.d.), which corresponds to discrepancy kernels being \textit{conditionally negative definite} (c.n.d.). Although such a property has the benefit of yielding the mathematical framework of Reproducing Kernel Hilbert Spaces, it essentially implies \textit{linearity}, \textit{i.e.} the possibility to embed the geometry defined by that kernel in a linear vector space. 

We now discuss that, interestingly, the Wasserstein kernel we used does not satisfy this property, and hence constitutes an interesting instance of a universal, non p.d. kernel. Let us remind these notions.

\noindent\textbf{Kernel definiteness.} A kernel $k$ is \textbf{\textit{positive definite}} \textit{\textbf{(p.d.)}} on $\mathcal{X}$ if for $n\in\mathbb{N}^*$, $x_1,...,x_n\in\mathcal{X}$ and $c_1,...,c_n\in\mathbb{R}$, we have $\sum_{ij}c_i c_j k(x_i,x_j)\geq 0$. It is \textbf{\textit{conditionally negative definite}} \textit{\textbf{(c.n.d.)}} on $\mathcal{X}$ if for $n\in\mathbb{N}^*$, $x_1,...,x_n\in\mathcal{X}$ and $c_1,...,c_n\in\mathbb{R}$ such that $\sum_i c_i=0$, we have $\sum_{ij}c_i c_j k(x_i,x_j)\leq 0$. 

These two notions relate to each other via the below result \cite{boughorbel2005conditionally}:

\begin{proposition}\label{prop:cnd} Let $k$ be a symmetric kernel on $\mathcal{X}$, let $x_0\in\mathcal{X}$ and define the kernel:
\begin{equation}
\tilde{k}(x,y):=-\frac{1}{2}[k(x,y)-k(x,x_0)-k(y,x_0)+k(x_0,x_0)].
\end{equation}
Then $\tilde{k}$ \textbf{is p.d. if and only if $k$ is c.n.d.} Example: $k=\Vert\cdot-\cdot\Vert_2^2$ and $x_0=\textbf{0}$ yield $\tilde{k}=\langle\cdot,\cdot\rangle$.
\end{proposition}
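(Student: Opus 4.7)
The plan is to reduce everything to a single algebraic identity relating the two quadratic forms. Given arbitrary $c_1,\ldots,c_n\in\mathbb{R}$ with $S:=\sum_i c_i$, I would expand $\sum_{ij}c_ic_j\tilde k(x_i,x_j)$ directly from the definition, using the symmetry of $k$, to obtain
\begin{equation*}
\sum_{i,j}c_ic_j\,\tilde k(x_i,x_j)\;=\;-\frac{1}{2}\Bigl[\,\sum_{i,j}c_ic_j\,k(x_i,x_j)\;-\;2S\sum_{i}c_i\,k(x_i,x_0)\;+\;S^{2}\,k(x_0,x_0)\,\Bigr].
\end{equation*}
This identity is the workhorse for both implications.

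For the direction ``$\tilde k$ p.d.\ $\Rightarrow$ $k$ c.n.d.'', I restrict to coefficient vectors with $S=0$. The last two terms in the bracket vanish, so the identity collapses to $\sum_{ij}c_ic_j\tilde k(x_i,x_j)=-\tfrac12\sum_{ij}c_ic_jk(x_i,x_j)$. Nonnegativity of the left-hand side (by hypothesis) then gives nonpositivity of $\sum_{ij}c_ic_jk(x_i,x_j)$, which is exactly the defining inequality of conditional negative definiteness.

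For the converse ``$k$ c.n.d.\ $\Rightarrow$ $\tilde k$ p.d.'', I would use the \emph{ghost-point trick}: given arbitrary $c_1,\dots,c_n$, attach the auxiliary coefficient $c_0:=-S$ to the base point $x_0$ that appears in the definition of $\tilde k$. By construction $c_0+c_1+\cdots+c_n=0$, so the c.n.d.\ hypothesis applies to the augmented family and yields $\sum_{i,j=0}^{n}c_ic_jk(x_i,x_j)\le 0$. Splitting off the $i=0$ and $j=0$ rows and substituting $c_0=-S$ recovers exactly the bracketed expression in the identity above, hence $\sum_{ij}c_ic_j\tilde k(x_i,x_j)\ge 0$, i.e., $\tilde k$ is p.d. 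The example is then a one-line check: with $k(x,y)=\|x-y\|_2^2$ and $x_0=\mathbf 0$, the terms $\|x\|^2$ and $\|y\|^2$ cancel and one is left with $\tilde k(x,y)=\langle x,y\rangle$.

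The only step requiring real insight is the ghost-point construction: recognizing that $x_0$ is precisely the ``missing'' point which, once assigned the compensating weight $-S$, promotes an unrestricted vector of coefficients to a zero-sum one eligible for the c.n.d.\ definition. Everything else is bookkeeping: expanding the quadratic form, keeping track of the symmetric cross terms, and verifying that the two expansions coincide term by term.
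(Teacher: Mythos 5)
The paper does not actually provide a proof of this proposition; it is stated as a citation to Boughorbel et al.\ (2005), as it is a classical result essentially due to Schoenberg. Your proof is correct and is the standard argument for this equivalence. The algebraic identity you derive is exactly right (expanding $\tilde k$, symmetry of $k$, and collecting the cross terms gives the bracketed expression with coefficient $-2S$ on the middle term and $S^2$ on the last), the $S=0$ specialization handles the forward direction cleanly, and the ghost-point augmentation $c_0:=-S$ attached to $x_0$ is precisely the standard device for the converse: it converts an arbitrary coefficient vector into a zero-sum one so the c.n.d.\ hypothesis applies, and the resulting quadratic form is verbatim the bracketed expression. The closing example computation is also correct. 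Since the paper offers no proof to compare against, the only remark worth making is that you have supplied the argument the cited reference would give; nothing is missing, and there are no gaps.
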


One can easily show that $\mathfrak{agg}$ also defines a p.d. kernel, and that $\mathfrak{agg}(\cdot,\cdot)\leq n^2 \Was(\cdot,\cdot)$. However, the Wasserstein kernel is not p.d., as stated in different variants before (e.g. \cite{vert2008optimal}) and as reminded by the below theorem. We here give a novel proof in \cref{sec:proof-pd}.

\begin{theorem}
\label{thm:pd}
We have that: i) The (similarity) Wasserstein kernel $\Was_{\mathrm{dot}}$ \textbf{is not positive definite}, and ii) The (discrepancy) Wasserstein kernel $\Was_{\mathrm{L2}}$ \textbf{is not conditionally negative definite}.
\end{theorem}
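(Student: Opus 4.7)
The plan is to first reduce (ii) to (i) via Proposition~\ref{prop:cnd}, then to settle (i) by exhibiting a small explicit counterexample. For the reduction, expanding the squared Euclidean distance inside the definition of $\Was_{\mathrm{L2}}$ and applying the marginal constraints $\T\mathbf{1} = \frac{1}{n}\mathbf{1}$, $\T^{T}\mathbf{1} = \frac{1}{n}\mathbf{1}$ gives the identity
\begin{equation}
\Was_{\mathrm{L2}}(\X, \Y) = \frac{1}{n}\sum_i \|\x_i\|^2 + \frac{1}{n}\sum_j \|\mathbf{y}_j\|^2 - 2\,\Was_{\mathrm{dot}}(\X, \Y).
\end{equation}
Choosing the base point $X_0 := \{\mathbf{0}, \ldots, \mathbf{0}\}$ in Proposition~\ref{prop:cnd}, one has $\Was_{\mathrm{L2}}(\X, X_0) = \frac{1}{n}\sum_i \|\x_i\|^2$ (the cost is independent of the transport plan) and $\Was_{\mathrm{L2}}(X_0, X_0) = 0$, so the auxiliary kernel $\tilde k$ constructed in Proposition~\ref{prop:cnd} collapses to precisely $\Was_{\mathrm{dot}}$. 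Hence $\Was_{\mathrm{dot}}$ is p.d.\ if and only if $\Was_{\mathrm{L2}}$ is c.n.d., and it suffices to refute (i).

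For (i), I work inside $\mathcal{X}_2^2$ with four 2-point clouds of antipodal unit vectors, $\X_i := \{a_i, -a_i\}$, where $a_1 = e_1$, $a_2 = e_2$, $a_3 = (e_1+e_2)/\sqrt{2}$ and $a_4 = (e_1-e_2)/\sqrt{2}$. For any such pair only two transport plans are feasible, and under the cost $c = -\langle\cdot,\cdot\rangle$ their costs are $\pm \langle a_i, a_j\rangle$, so one reads off $\Was_{\mathrm{dot}}(\X_i, \X_j) = |\langle a_i, a_j\rangle|$. This yields $\Was_{\mathrm{dot}}(\X_i, \X_i) = 1$, $\Was_{\mathrm{dot}}(\X_1, \X_2) = \Was_{\mathrm{dot}}(\X_3, \X_4) = 0$ and $\Was_{\mathrm{dot}}(\X_i, \X_j) = \sqrt{2}/2$ for each of the four ``cross'' pairs $(1,3), (1,4), (2,3), (2,4)$. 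Testing the resulting $4 \times 4$ Gram matrix $G$ against the coefficients $\alpha = (1, 1, -1, -1)$ gives $\alpha^{T} G \alpha = 4 - 4\sqrt{2} < 0$, so $G$ is not positive semidefinite, refuting (i) and, by the reduction, also (ii).

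The main obstacle is choosing a configuration that is simple enough to verify by hand yet rich enough to break the Hilbert structure: for singletons $\Was_{\mathrm{dot}}$ coincides with the ordinary inner product, and 2-point clouds in $\mathbb{R}$ embed isometrically into $\mathbb{R}^2$ via sorting, so the counterexample must live in dimension at least $2$. The antipodal-pair construction is attractive because it reduces the bookkeeping to $\Was_{\mathrm{dot}}(\X_i, \X_j) = |\langle a_i, a_j\rangle|$, making transparent that the pathology comes from taking absolute values of entries of a genuine Gram matrix; the choice $\alpha = (1, 1, -1, -1)$ is tailored to the block pattern in which the ``parallel'' pairs $(1,2)$ and $(3,4)$ carry $0$ while the four cross pairs carry $\sqrt{2}/2$.
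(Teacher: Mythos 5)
Your proof is correct and follows essentially the same approach as the paper: for part (ii) you invoke Proposition~\ref{prop:cnd} with the all-zero base point cloud exactly as the paper does (your explicit identity $\Was_{\mathrm{L2}} = \tfrac{1}{n}\sum_i\|\x_i\|^2 + \tfrac{1}{n}\sum_j\|\mathbf{y}_j\|^2 - 2\Was_{\mathrm{dot}}$ just makes the paper's terse ``$\tilde k = \Was_{\mathrm{dot}}$'' claim explicit), and for part (i) you produce an explicit $4\times 4$ Gram matrix of $\Was_{\mathrm{dot}}$ over four two-point clouds in the plane and show it is not positive semidefinite. The only substantive difference is the choice of counterexample: the paper uses $\X_1,\dots,\X_4$ built from the four lattice points $\{0,1\}^2$ and concludes via a negative determinant ($\det = -1$ in the unnormalized convention, or $-1/16$ after dividing by $n=2$), whereas your antipodal-pair construction $\X_i = \{a_i, -a_i\}$ gives the cleaner closed form $\Was_{\mathrm{dot}}(\X_i,\X_j) = |\langle a_i, a_j\rangle|$ and an explicit witness vector $\alpha = (1,1,-1,-1)$ with $\alpha^{T}G\alpha = 4 - 4\sqrt{2} < 0$. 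Your variant is arguably more transparent, since it exposes the source of the failure as the absolute value destroying the Hilbert-space structure of an honest Gram matrix, but both are instances of the same counterexample-plus-reduction strategy.
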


\section{Experiments} \label{sec:exp}

\begin{table*}
\begin{center}
  \begin{tabular}{c|p{3.2cm}|p{2.18cm}|p{2.08cm}|p{2.08cm}|p{2.08cm} }
 & & ESOL (RMSE)  & Lipo (RMSE)  & BACE (AUC)  & BBBP (AUC) \\
& Models \ & \# grphs $=1128$ & \# grphs$=4199$ & \# grphs$=1512$ & \# grphs$=2039$ \\  
 \hline \hline
\parbox[t]{2mm}{\multirow{3}{*}{\rotatebox[origin=c]{90}{Baselines \,}}} & Fingerprint+MLP \qquad & .922 $\pm$ .017& .885 $\pm$ .017 & .870 $\pm$ .007 & .911 $\pm$ .005 \\ 
& $\mathrm{GIN}$ \qquad & .665 $\pm$ .026 & .658 $\pm$ .019 & .861 $\pm$ .013 & .900 $\pm$ .014 \\ 
& $\mathrm{GAT}$ \qquad & .654 $\pm$ .028 & .808 $\pm$ .047 & .860 $\pm$ .011 & .888 $\pm$ .015 \\ 
& D-MPNN \qquad & .635 $\pm$ .027& .646 $\pm$ .041 & .865 $\pm$ .013 & .915 $\pm$ .010 \\ 
& D-MPNN+SAG Pool \qquad & .674 $\pm$ .034 & .720 $\pm$ .039 & .855 $\pm$ .015 & .901 $\pm$ .034 \\
& D-MPNN+TopK Pool \qquad & .673 $\pm$ .087 & .675 $\pm$ .080 & .860 $\pm$ .033 & .912 $\pm$ .032 \\
 \hline \hline
\parbox[t]{2mm}{\multirow{5}{*}{\rotatebox[origin=c]{90}{Ours}}} & $\mathrm{ProtoS\text{-}L2}$\qquad\ \ \ \ \ \ \  & .611 $\pm$ .034 & \textbf{.580 $\pm$ .016}& .865 $\pm$ .010 & .918 $\pm$ .009 \\ 
& $\mathrm{ProtoW\text{-}Dot}$ \textit{(no reg.)} & .608 $\pm$ .029 & .637 $\pm$ .018 & .867 $\pm$ .014 & .919 $\pm$ .009 \\
& $\mathrm{ProtoW\text{-}Dot}$\quad\ \ \ \ \ \,   & \textbf{.594 $\pm$ .031} & .629 $\pm$ .015 & .871 $\pm$ .014 & .919 $\pm$ .009 \\
 
& $\mathrm{ProtoW\text{-}L2}$ \textit{(no reg.)}\ \ & .616 $\pm$ .028 & .615 $\pm$ .025 & \underline{.870 $\pm$ .012} & \underline{.920 $\pm$ .010} \\
& $\mathrm{ProtoW\text{-}L2}$\qquad\quad \ \ \ \ \ & \underline{.605 $\pm$ .029} & \underline{.604 $\pm$ .014} & \textbf{.873 $\pm$ .015} & \textbf{.920 $\pm$ .010} \\
  \end{tabular}
 
\caption{Results on the property prediction datasets. \textbf{Best} model is in bold, \underline{second} best is underlined. Lower RMSE and higher AUC are better. Wasserstein models are by default trained with contrastive regularization as described in \cref{sec:reg}. GIN, GAT and D-MPNN use summation pooling which outperforms max and mean pooling. SAG and TopK graph pooling methods are also used with D-MPNN. }
\label{tab:1}
\end{center}
\end{table*}

\subsection{Experimental Setup}
We experiment on 4 benchmark molecular property prediction datasets \citep{yang2019analyzing} including both regression (ESOL, Lipophilicity) and classification (BACE, BBBP) tasks. These datasets cover different complex chemical properties (e.g. ESOL - water solubility, LIPO - octanol/water distribution coefficient, BACE - inhibition of human $\beta$-secretase, BBBP - blood-brain barrier penetration). 

\noindent\textbf{$\mathbf{Fingerprint + MLP}$} applies a MLP over the input features which are hashed graph structures (called a molecular fingerprint). \textbf{$\mathbf{GIN}$} is the Graph Isomorphism Network from \citep{xu2018powerful}, which is a variant of a GNN. The original GIN does not account for edge features, so we adapt their algorithm to our setting. Next, \textbf{$\mathbf{GAT}$} is the Graph Attention Network from \citep{velivckovic2017graph}, which uses multi-head attention layers to propagate information. The original GAT model does not account for edge features, so we adapt their algorithm to our setting. More details about our implementation of the GIN and GAT models can be found in the \cref{sec:appendix-baseline}. 

\textbf{Chemprop - D-MPNN} \citep{yang2019analyzing} is a graph network that exhibits state-of-the-art performance for molecular representation learning across multiple classification and regression datasets. Empirically we find that this baseline is indeed the best performing, and so is used as to obtain node embeddings in all our prototype models. Additionally, for comparison to our methods, we also add several graph pooling baselines. We apply the graph pooling methods, \textbf{SAG pooling} \citep{lee2019self} and \textbf{TopK pooling} \citep{gao2019graph}, on top of the D-MPNN for fair comparison. 

\begin{figure}[t]
    \centering
  \subfigure[ESOL D-MPNN]{\includegraphics[scale=0.11]{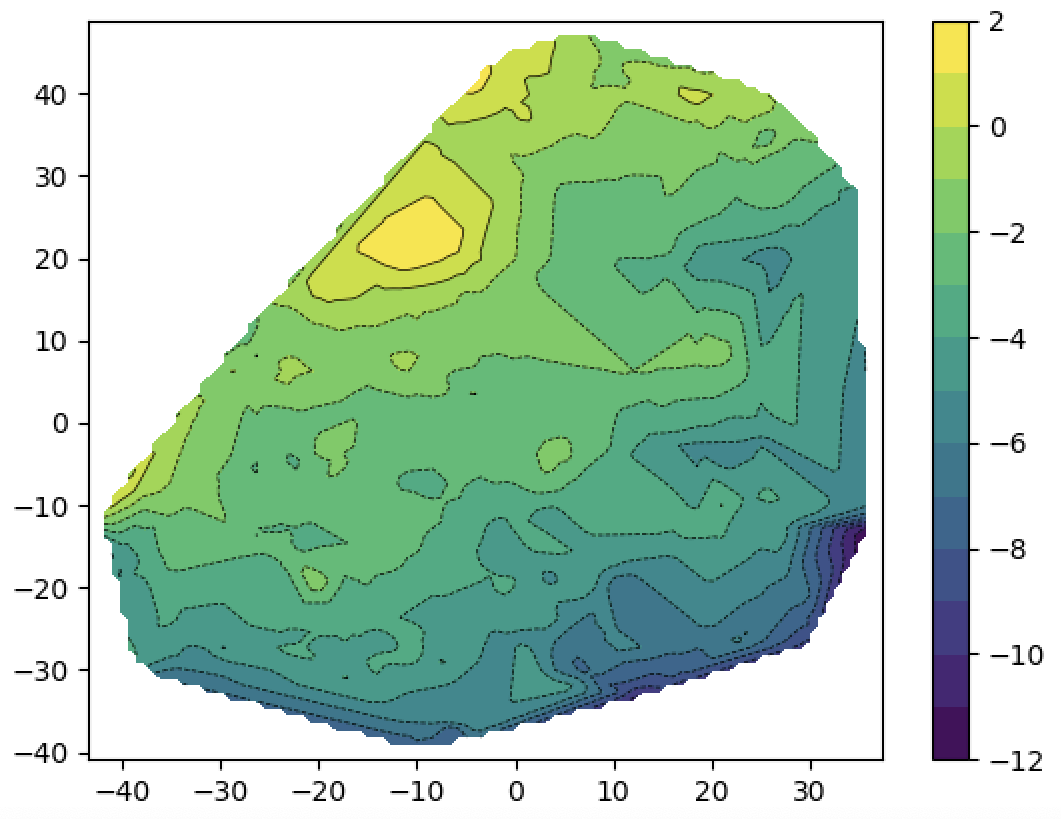}}
  \subfigure[ESOL ProtoW-L2]{\includegraphics[scale=0.11]{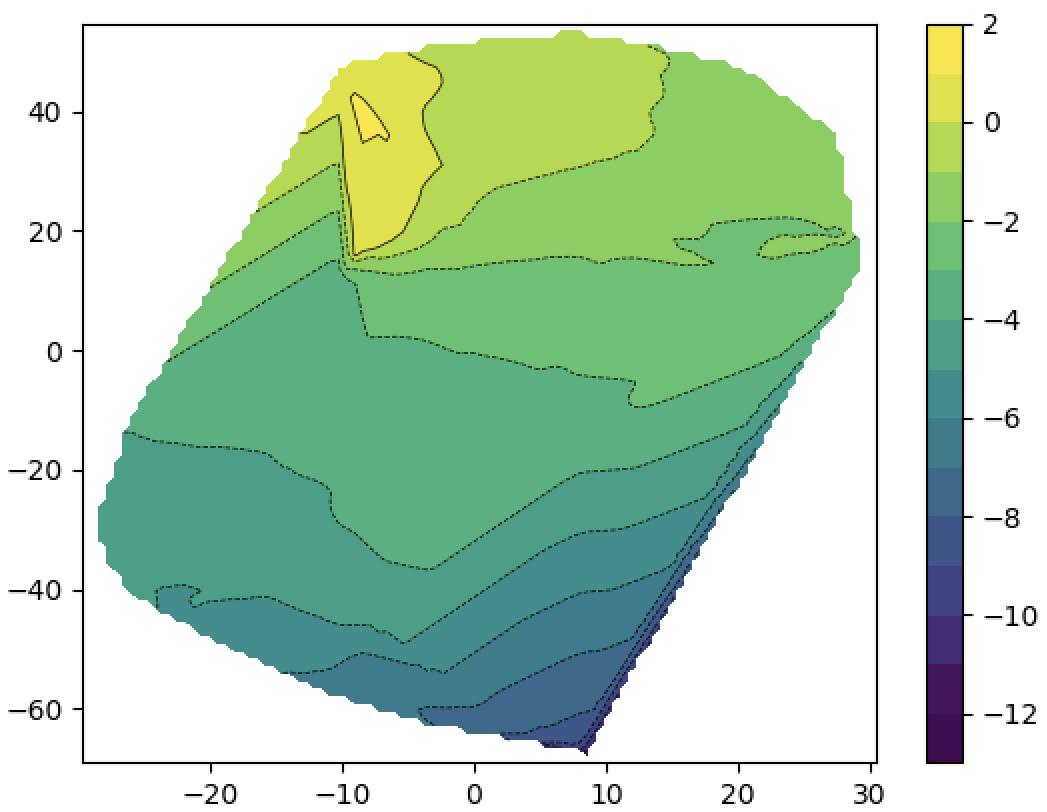}}  
  \subfigure[LIPO D-MPNN]{\includegraphics[scale=0.11]{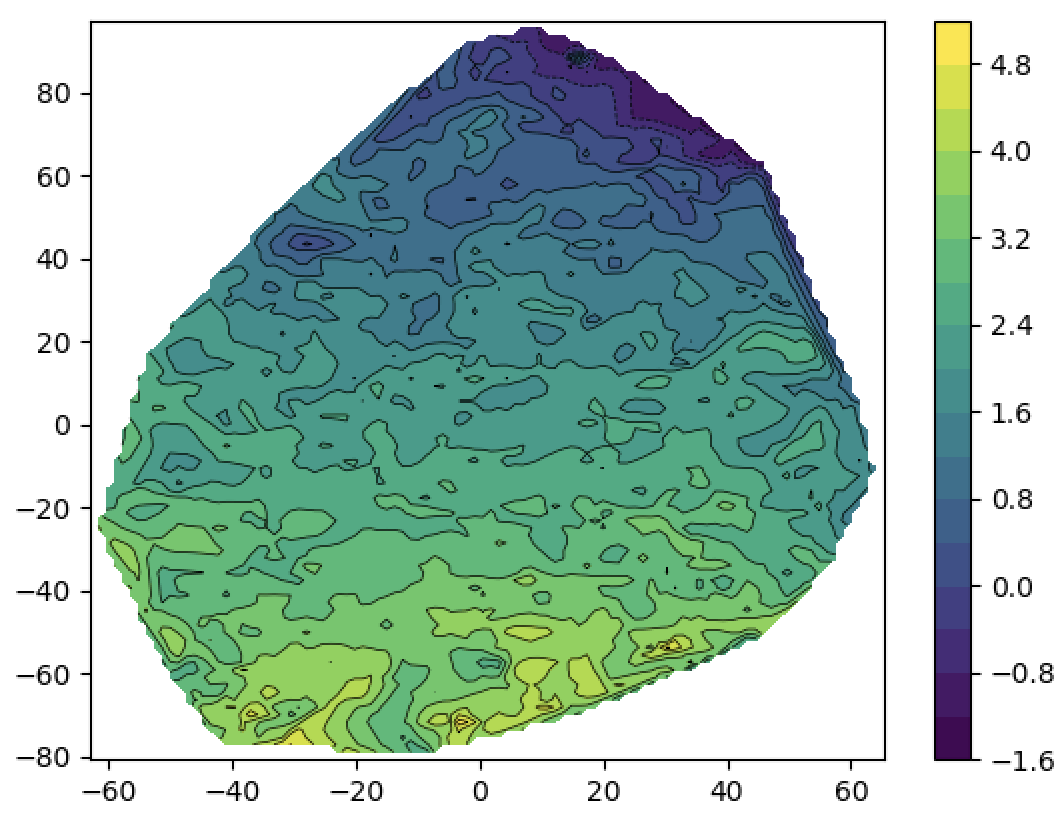}}
  \subfigure[LIPO ProtoW-L2]{\includegraphics[scale=0.11]{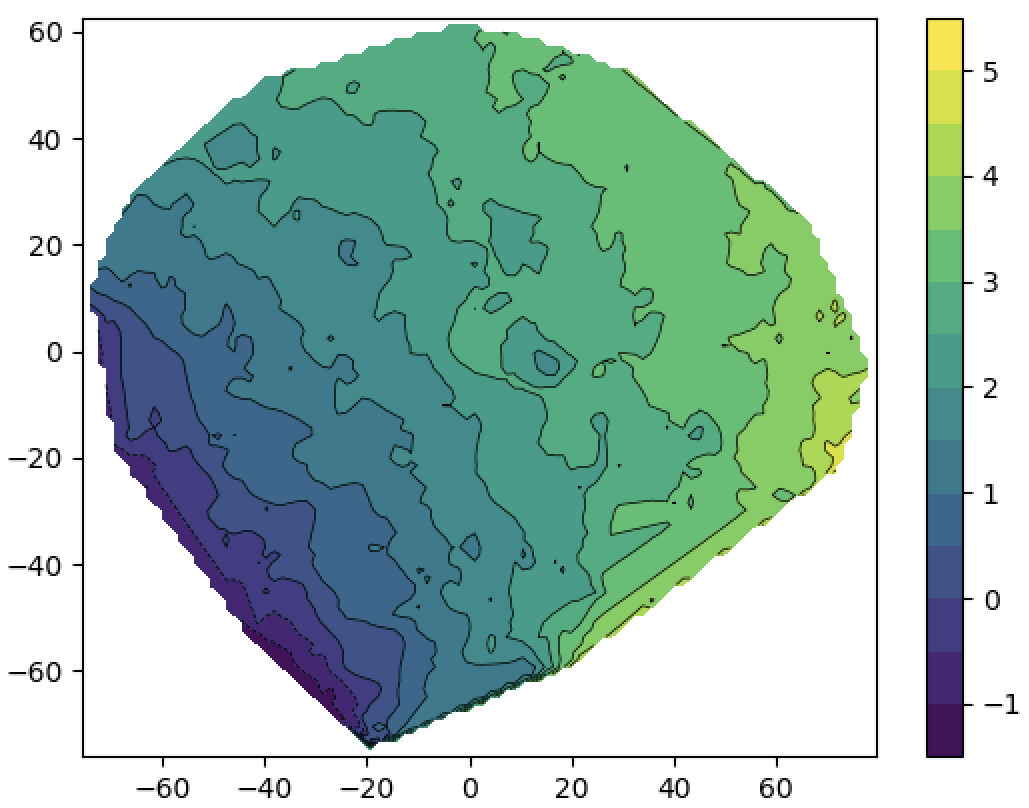}}  
  \caption{2D heatmaps of T-SNE projections of molecular embeddings (before the last linear layer) w.r.t. their associated predicted labels on test molecules. Comparing (a) vs (b) and (c) vs (d), we can observe a smoother space of our model compared to the D-MPNN baseline.  \vspace{-0.3cm}}
    \label{fig:embedding_space}
\end{figure}

\begin{figure}[t]
    \centering
  \subfigure[ESOL D-MPNN]{\includegraphics[scale=0.20]{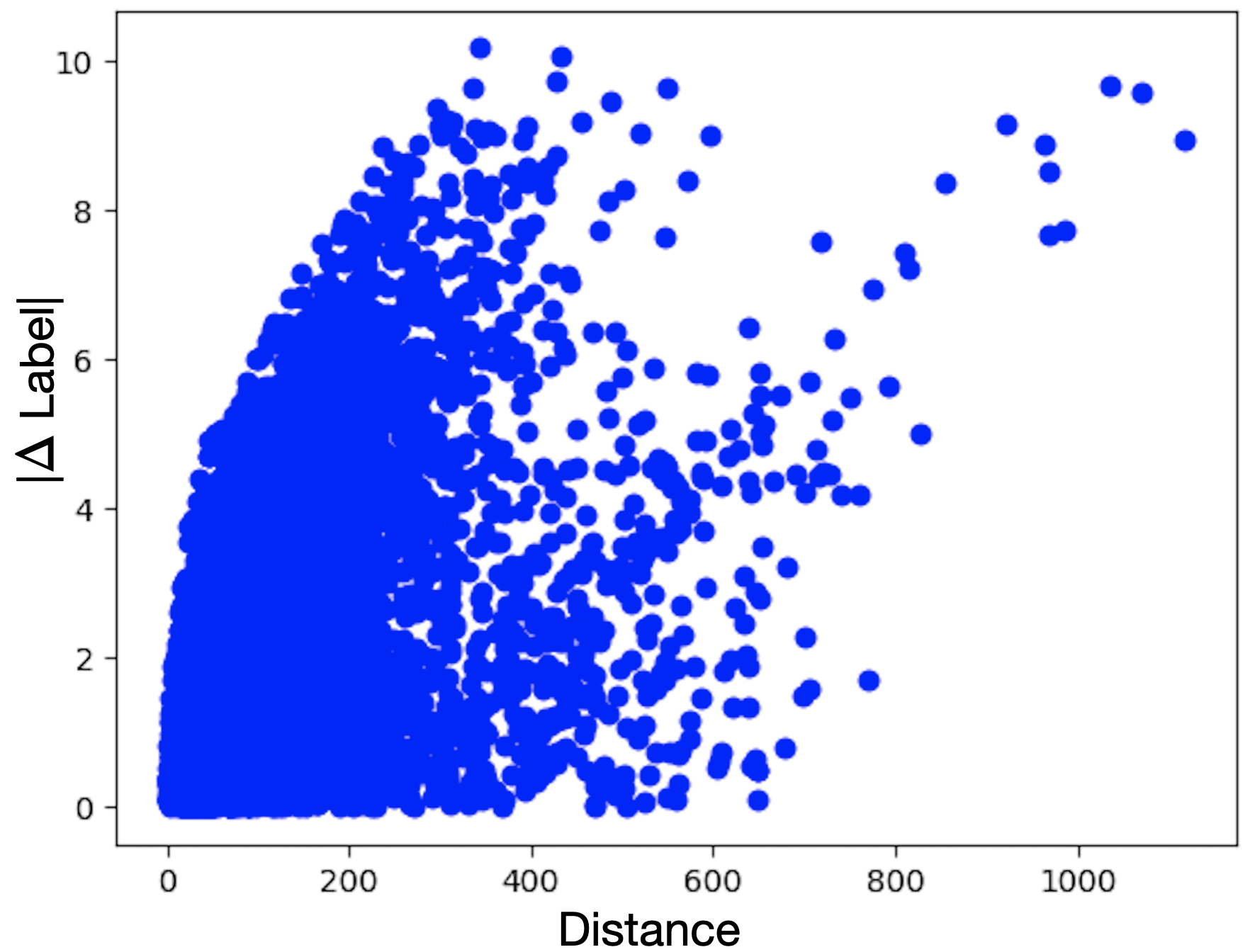}}
  \subfigure[ESOL ProtoW-L2]{\includegraphics[scale=0.20]{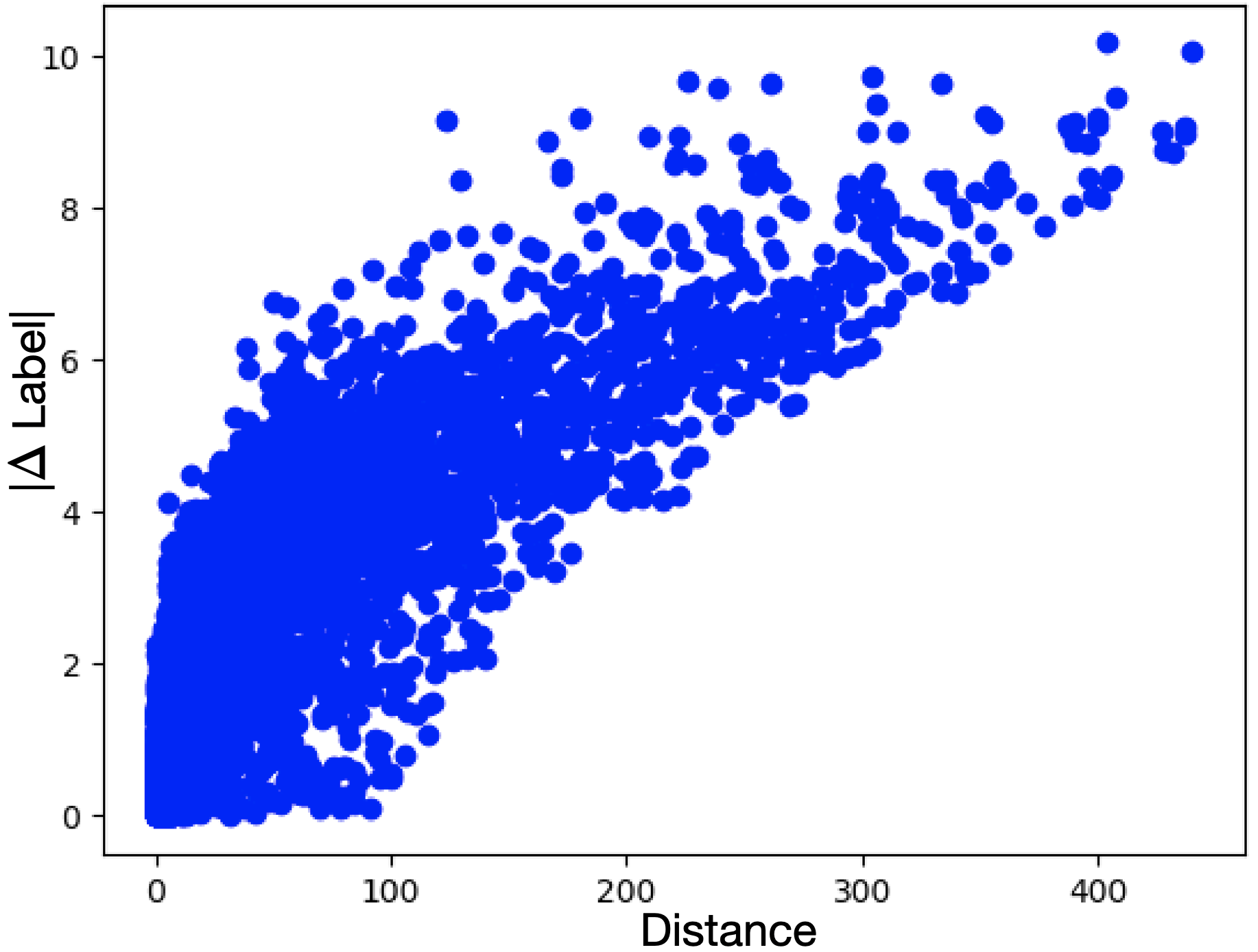}}  
  \caption{Comparison of the correlation between graph embedding distances (X axis) and label distances (Y axis) on the ESOL dataset.}\label{fig:r_coef}
\end{figure}

\begin{figure}
\centering
  {\begin{tabular}{ |c|c|c| } 
 \hline
 & Spearman $\rho$ & Pearson $r$  \\
 \hline
 D-MPNN & .424 $\pm$ .029 & .393 $\pm$ .049 \\ 
 \hline
 $\mathrm{ProtoS\text{-}L2}$ & .561 $\pm$ .087& .414 $\pm$ .141 \\ 
 \hline
 $\mathrm{ProtoW\text{-}Dot}$ & .592 $\pm$ .150 & .559 $\pm$ .216  \\
 \hline
 $\mathrm{ProtoW\text{-}L2}$  & \textbf{.815 $\pm$ .026} & \textbf{.828 $\pm$ .020} \\
 \hline
 \end{tabular}
  }
{\caption{The Spearman and Pearson correlation coefficients on the ESOL dataset for the $\mathrm{D-MPNN}$ and $\mathrm{ProtoW\text{-}L2}$ model w.r.t. the pairwise difference in embedding vectors and labels.}\label{tab:2}}
\end{figure}

\par Different variants of our OT-GNN prototype model are described below: 

\noindent\textbf{$\mathbf{ProtoW\text{-}L2/Dot}$} is the model that treats point clouds as point sets, and computes the Wasserstein distances to each point cloud (using either L2 distance or (minus) dot product cost functions) as the molecular embedding. \textbf{$\mathbf{ProtoS\text{-}L2}$} is a special case of $\mathbf{ProtoW\text{-}L2}$, in which the point clouds have a \textit{single} point and instead of using Wasserstein distances, we just compute simple Euclidean distances between the aggregated graph embedding and point clouds. Here, we omit using dot product distances since such a  model is mathematically equivalent to the GNN model.

We use the the POT library \citep{flamary2017pot} to compute Wasserstein distances using the Earth Movers Distance algorithm. We define the cost matrix by taking the pairwise L2 or negative dot product distances. As mentioned in \cref{sec:opt}, we fix the transport plan, and only backpropagate through the cost matrix for computational efficiency. Additionally, to account for the variable size of each input graph, we multiply the OT distance between two point clouds by their respective sizes. More details about experimental setup are presented in \cref{sec:appendix-exp}. 

\subsection{Experimental Results} \label{sec:contrastive-samples}
We next delve into further discussions of our experimental results. Specific experimental setup details, model sizes, parameters and runtimes can be found in \cref{sec:experimental-details}.

\textbf{Regression and Classification. } Results are shown in \cref{tab:1}. Our prototype models outperform popular GNN/D-MPNN baselines on all 4 property prediction tasks. Moreover, the prototype models using Wasserstein distance ($\mathbf{ProtoW\text{-}L2/Dot}$) achieve better performance on 3 out of 4 of the datasets compared to the prototype model that uses only Euclidean distances ($\mathbf{ProtoS\text{-}L2}$). This indicates that Wasserstein distance confers greater discriminative power compared to traditional aggregation methods. We also find that the baseline pooling methods perform worse than the D-MPNN, and we attribute this to the fact that these models were originally created for large graphs networks without edge features, not for small molecular graphs. 

\textbf{Noise Contrastive Regularizer. } Without any constraints, the Wasserstein prototype model will often collapse the set of points in a point cloud into a single point. As mentioned in \cref{sec:reg}, we use a contrastive regularizer to force the model to meaningfully distribute point clouds in the embedding space. We show 2D embeddings in \cref{fig:nce_embeddings}, illustrating that without contrastive regularization, prototype point clouds are often displaced close to their mean, while  regularization forces them to nicely scatter. Quantitative results in \cref{tab:1} also highlight the benefit of this regularization.

\textbf{Learned Embedding Space. } We further examine the learned embedding space of the best baseline (i.e. D-MPNN) and our best Wasserstein model. We claim that our models learn smoother latent representations. We compute the pairwise difference in embedding vectors and the labels for each test data point on the ESOL dataset. Then, we compute two measures of rank correlation, Spearman correlation coefficient ($\rho$) and Pearson correlation coefficient ($r$). This is reminiscent of evaluation tasks for the correlation of word embedding similarity with human labels \citep{luong2013better}.

Our $\mathrm{ProtoW\text{-}L2}$ achieves better $\rho$ and $r$ scores compared to the D-MPNN model (\cref{tab:2}) indicating that our Wasserstein model constructs more meaningful embeddings with respect to the label distribution, which can be inferred also from \cref{fig:r_coef}. Our $\mathrm{ProtoW\text{-}L2}$ model, trained to optimize distances in the embedding space, produces more meaningful representations w.r.t. the label of interest.

Moreover, as qualitatively shown in \cref{fig:embedding_space}, our model provides more robust molecular embeddings compared to the baseline, in the following sense: we observe that a small perturbation of a molecular embedding corresponds to a small change in predicted property value -- a desirable phenomenon that holds rarely for the baseline D-MPNN model. Our Proto-W-L2 model yields smoother heatmaps.

\section{Related Work} \label{sec:rel}
\textbf{Related Work on Graph Neural Networks. }  Graph Neural Networks were introduced by \cite{gori2005new} and \cite{scarselli2008graph} as a form of recurrent neural networks. Graph convolutional networks (GCN) appeared later on in various forms.  \cite{duvenaud2015convolutional,atwood2016diffusion} proposed a propagation rule inspired from convolution and diffusion, but these methods do not scale to graphs with either large degree distribution or node cardinality. \cite{niepert2016learning} defined a GCN as a 1D convolution on a chosen node ordering.  \cite{kearnes2016molecular} also used graph convolutions to generate high quality molecular fingerprints. Efficient spectral methods were proposed by \cite{bruna2013spectral,NIPS2016_6081}. \cite{kipf2016semi} simplified their propagation rule, motivated from spectral graph theory \citep{hammond2011wavelets}. Different such architectures were later unified into the message passing neural networks (MPNN) framework by \cite{gilmer2017neural}. A directed MPNN variant was later used to improve state-of-the-art in molecular property prediction on a wide variety of datasets by \citep{yang2019analyzing}. Inspired by DeepSets \cite{zaheer2017deep}, \cite{xu2018powerful} propose a  simplified, theoretically powerful, GCN architecture. Other recent approaches modify the sum-aggregation of node embeddings in the GCN architecture to preserve more information \cite{kondor2018covariant,hongbin2020geomgcn}. In this category there is also the recently growing class of hierarchical graph pooling methods which typically either use deterministic and non-differentiable
node clustering \citep{NIPS2016_6081,jin2018junction}, or differentiable pooling \citep{ying2018hierarchical,noutahi2019towards,gao2019graph}. However, these methods are still strugling with small labelled graphs such as molecules where global and local node interconnections cannot be easily cast as a hierarchical interaction. Other recent geometry-inspired GNNs include adaptations to  non-Euclidean spaces \citep{liu2019hyperbolic,chami2019hyperbolic,bachmann2019constant, fey2020deep}, and different metric learning on graphs \citep{riba2018learning, bai2019simgnn, li2019graph}, but we emphasize our novel direction in learning prototype point clouds.

\textbf{Related Work on Prototype Learning.}  Learning prototypes to solve machine learning tasks started to become popular with the introducton of generalized learning vector quantization (GLVQ) methods \citep{kohonen1995learning,sato1995generalized}. These approaches perform classification by assigning the class of the closest neighbor prototype to each data point, where Euclidean distance function was the typical choice. Each class has a prototype set that is jointly optimized such that the closest wrong prototype is moved away, while the correct prototype is brought closer. Several extensions  \citep{hammer2002generalized,schneider2009adaptive,bunte2012limited} introduce feature weights and parameterized input transformations to leverage more flexible and adaptive metric spaces. Nevertheless, such models are limited to classification tasks and might suffer from extreme gradient sparsity.

Closer to our work are the radial basis function (RBF) networks \citep{chen1991orthogonal} that perform classification/regression based on RBF kernel similarities to prototypes. One such similarity vector is used  with a shared linear output layer to obtain the final prediction per each data point. Prototypes are typically set in an unsupervised fashion, e.g. via k-means clustering, or using the Orthogonal Least Square Learning algorithm, unlike being learned using backpropagation as in our case.

Combining non-parametric kernel methods with the flexibility of deep learning models have resulted in more expressive and scalable similarity functions, conveniently trained with backpropagation and Gaussian processes \citep{wilson2016deep}. Learning parametric data embeddings and prototypes was also investigated for few-shot and zero-shot classification scenarios \citep{snell2017prototypical}. Last, \citet{duin2012dissimilarity} use distances to prototypes as opposed to p.d. kernels. 

In contrast with the above line of work, our research focuses on learning parametric prototypes for graphs trained jointly with graph embedding functions for both graph classification and regression problems. Prototypes are modeled as sets (point clouds) of embeddings, while graphs are represented by sets of unaggregated node embeddings obtained using graph neural network models. Disimilarities between prototypes and graph embeddings are then quantified via set distances computed using optimal transport. Additional challenges arise due to the combinatorial nature of the  Wasserstein distances between sets, hence our discussion on introducing the noise contrastive regularizer.

\section{Conclusion}
We propose $\mathbf{OT\text{-}GNN}$: one of the first parametric graph models that leverages optimal transport to learn graph representations. It learns abstract prototypes as free parametric point clouds that highlight different aspects of the graph. Empirically, we outperform popular baselines in different molecular property prediction tasks, while the learned representations also exhibit stronger correlation with the target labels. Finally, universal approximation theoretical results enhance the merits of our model.

\appendix

\clearpage

\section{Further Details on Contrastive Regularization}
\subsection{Motivation}\label{sec:discussion-nce}

One may speculate that it was locally easier for the model to extract valuable information if it would behave like the Euclidean component, preventing it from exploring other roads of the optimization landscape. To better understand this situation, consider the scenario in which a subset of points in a prototype point cloud ``collapses", i.e. points become close to each other (see \cref{fig:nce_embeddings}), thus sharing similar distances to all the node embeddings of real input graphs. The submatrix of the optimal transport matrix corresponding to these collapsed points can be equally replaced by any other submatrix with the same marginals (\textit{i.e.} same two vectors obtained by summing rows or columns), meaning that the optimal transport matrix is not discriminative. In general, we want to avoid any two rows or columns in the Wasserstein cost matrix being proportional. 

\paragraph{An optimization difficulty.} An additional problem of point collapsing is that it is a non-escaping situation when using gradient-based learning methods. The reason is that gradients of all these collapsed points would become and remain identical, thus nothing will encourage them to ``separate" in the future.

\paragraph{Total versus local collapse.} Total collapse of all points in a point cloud to its mean is not the only possible collapse case. We note that the collapses are, in practice, mostly local, i.e. some clusters of the point cloud collapse, not the entire point cloud. We argue that this is still a weakness compared to fully uncollapsed point clouds due to the resulting non-discriminative transport plans and due to optimization difficulties discussed above.

\subsection{On the Choice of Contrastive Samples}\label{sec:contrastive-samples}

Our experiments were conducted with ten negative samples for each correct transport plan. Five of them were obtained by initializing a matrix with uniform \textit{i.i.d} entries from $[0,10)$ and performing around five Sinkhorn iterations \citep{cuturi2013sinkhorn} in order to make the matrix satisfy the marginal constraints. The other five were obtained by randomly permuting the columns of the correct transport plan. The latter choice has the desirable effect of penalizing the points of a prototype point cloud $\Q_i$ to collapse onto the same point. Indeed, the rows of $\T^i\in\mathcal{C}_{\mathbf{H} \Q_i}$ index points in $\mathbf{H}$, while its columns index points in $\Q_i$. 

\section{Theoretical Results}

\subsection{Proof of \cref{thm:univ}}\label{sec:proof-univ}

\paragraph{1.} Let us first justify why $\mathfrak{agg}$ is not universal. Consider a function $f\in\mathcal{C}(\mathcal{X})$ such that there exists $\X,\mathbf Y\in\mathcal{X}$ satisfying both $f(\X)\neq f(\mathbf Y)$ and $\sum_k \x_k=\sum_l \mathbf y_l$. Clearly, any function of the form $\sum_i \alpha_i\sigma(\mathfrak{agg}(\mathbf W_i,\cdot)+\theta_i)$ would take equal values on $\X$ and $\mathbf Y$ and hence would not approximate $f$ arbitrarily well.

\paragraph{2.} To justify that $\mathcal{W}$ is universal, we take inspiration from the proof of universality of neural networks \cite{cybenko1989approximation}.

\paragraph{Notation.} Denote by $M(\mathcal{X})$ the space of finite, signed regular Borel measures on $\mathcal{X}$. 

\paragraph{Definition.} We say that $\sigma$ is discriminatory w.r.t a kernel $k$ if for a measure $\mu\in M(\mathcal{X})$, $$\int_{\mathcal{X}} \sigma(k(\mathbf Y,\X)+\theta)d\mu(\X) = 0$$ for all $\mathbf Y\in\mathcal{X}^n_d$ and $\theta\in\mathbb{R}$ implies that $\mu \equiv 0$.

We start by reminding a lemma coming from the original paper on the universality of neural networks by Cybenko \cite{cybenko1989approximation}.\\

\paragraph{Lemma.} If $\sigma$ is discriminatory w.r.t. $k$ then $k$ is universal.\\

\textit{Proof:} Let $S$ be the subset of functions of the form $\sum_{i=1}^m\alpha_i\sigma(k(\cdot,\Q_i)+\theta_i)$ for any $\theta_i\in\mathbb{R}$, $\Q_i\in\mathcal{X}_d^n$ and $m\in\mathbb{N}^*$ and denote by $\bar{S}$ the closure\footnote{W.r.t the topology defined by the sup norm $\Vert f\Vert_{\infty,\mathcal{X}}:=\sup_{X\in\mathcal{X}}\vert f(X)\vert$.} of $S$ in $\mathcal{C}(\mathcal{X})$. Assume by contradiction that $\bar{S} \neq \mathcal{C}(\mathcal{X})$. By the Hahn-Banach theorem, there exists a bounded linear functional $L$ on $\mathcal{C}(\mathcal{X})$ such that for all $h\in \bar{S}$, $L(h)=0$ and such that there exists $h'\in\mathcal{C}(\mathcal{X})$ s.t. $L(h')\neq  0$. By the Riesz representation theorem, this bounded linear functional is of the form:
$$L(h)= \int_{\X\in\mathcal{X}} h(\X)d\mu(\X),$$
for all $h\in\mathcal{C}(\mathcal{X})$, for some $\mu\in M(\mathcal{X})$. Since $\sigma(k(\Q,\cdot)+\theta)$ is in $\bar{S}$, we have  $$\int_{\mathcal{X}} \sigma(k(\Q,\X)+\theta)d\mu(\X) = 0$$ for all $\Q\in\mathcal{X}^n_d$ and $\theta\in\mathbb{R}$. Since $\sigma$ is discriminatory w.r.t. $k$, this implies that $\mu=0$ and hence $L\equiv0$, which is a contradiction with $L(h')\neq  0$. Hence $\bar{S} = \mathcal{C}(\mathcal{X})$, \textit{i.e.} $S$ is dense in $\mathcal{C}(\mathcal{X})$ and $k$ is universal.
\begin{flushright}
$\square$
\end{flushright}

\noindent Now let us look at the part of the proof that is new.

\paragraph{Lemma.} $\sigma$ is discriminatory w.r.t. $\mathcal{W}_{\mathrm{L2}}$.\\

\textit{Proof:} Note that for any $\X,\mathbf Y,\theta,\varphi$, when $\lambda\to+\infty$ we have that $\sigma(\lambda(\mathcal{W}_{\mathrm{L2}}(\X,\mathbf Y)+\theta)+\varphi )$ goes to 1 if $\mathcal{W}_{\mathrm{L2}}(\X,\mathbf Y)+\theta>0$, to 0 if $\mathcal{W}_{\mathrm{L2}}(\X,\mathbf Y)+\theta<0$ and to $\sigma(\varphi)$ if $\mathcal{W}_{\mathrm{L2}}(\X,\mathbf Y)+\theta=0$. 

Denote by $\Pi_{\mathbf Y,\theta}:=\{\X\in\mathcal{X}\mid \mathcal{W}_{\mathrm{L2}}(\X,\mathbf Y)-\theta = 0\}$ and $B_{\mathbf Y,\theta}:=\{\X\in\mathcal X \mid \sqrt{\mathcal{W}_{\mathrm{L2}}(\X,\mathbf Y)} < \theta \}$ for $\theta\geq 0$ and $\emptyset$ for $\theta<0$. By the Lebesgue Bounded Convergence Theorem we have: 
\begin{align*}
0 &= \int_{\X\in\mathcal{X}} \lim_{\lambda\to +\infty} \sigma(\lambda(\mathcal{W}_{\mathrm{L2}}(\X,\mathbf Y)-\theta)+\varphi)d\mu(\X)\\
&= \sigma(\varphi)\mu(\Pi_{\mathbf Y,\theta})+\mu(\mathcal{X}\setminus B_{\mathbf Y,\sqrt{\theta}}).
\end{align*}

Since this is true for any $\varphi$, it implies that $\mu(\Pi_{\mathbf Y,\theta})=\mu(\mathcal{X}\setminus B_{\mathbf Y,\sqrt{\theta}})=0$. From $\mu(\mathcal{X})=0$ (because $B_{\mathbf Y,\sqrt{\theta}}=\emptyset$ for $\theta <0$), we also have $\mu(B_{\mathbf Y,\sqrt{\theta}})=0$. Hence $\mu$ is zero on all balls defined by the metric $\sqrt{\Was_{\mathrm{L2}}}$.

From the Hahn decomposition theorem, there exist disjoint Borel sets $P,N$ such that $\mathcal{X}=P\cup N$ and $\mu=\mu^+ - \mu^-$ where $\mu^+(A):=\mu(A\cap P)$, $\mu^-(A):=\mu(A\cap N)$ for any Borel set $A$ with $\mu^+,\mu^-$ being positive measures. Since $\mu^+$ and $\mu^-$ coincide on all balls on a finite dimensional metric space, they coincide everywhere \cite{hoffmann1976measures} and hence $\mu \equiv 0$.


\begin{flushright}
$\square$
\end{flushright}

Combining the previous lemmas with $k=\mathcal{W}_{\mathrm{L2}}$ concludes the proof. 
\begin{flushright}
$\square$
\end{flushright}

\subsection{Proof of \cref{thm:pd}}\label{sec:proof-pd}

\paragraph{1.} We build a counter example. We consider $4$ point clouds of size $n=2$ and dimension $d=2$. First, define $\uu_i = (\lfloor i/2\rfloor, i\%2)$ for $i\in\{0,...,3\}$. Then take $\X_1=\{\uu_0,\uu_1\}$, $\X_2=\{\uu_0,\uu_2\}$, $\X_3=\{\uu_0,\uu_3\}$ and $\X_4=\{\uu_1,\uu_2\}$.
On the one hand, if $\Was(\X_i, \X_j)=0$, then all vectors in the two point clouds are orthogonal, which can only happen for $\{i,j\}=\{1,2\}$. On the other hand, if $\Was(\X_i,\X_j)=1$, then either $i=j=3$ or $i=j=4$. This yields the following Gram matrix
\begin{equation}
(\Was(\X_i,\X_j))_{0\leq i,j\leq 3}=
\begin{pmatrix}
1 & 0 & 1 & 1 \\
0 & 1 & 1 & 1 \\
1 & 1 & 2 & 1 \\
1 & 1 & 1 & 2
\end{pmatrix}
\end{equation}
whose determinant is $-1/16$, which implies that this matrix has a negative eigenvalue.

\paragraph{2.} This comes from proposition \cref{prop:cnd}. Choosing $k=\Was_{\mathrm{L2}}$ and $x_0=\mathbf{0}$ to be the trivial point cloud made of $n$ times the zero vector yields $\tilde{k} = \Was_{\mathrm{dot}}$. Since $\tilde{k}$ is not positive definite from the previous point of the theorem, $k$ is not conditionally negative definite from proposition \cref{prop:cnd}.

\begin{flushright}
$\square$
\end{flushright}

\subsection{Shape of the optimal transport plan for point clouds of same size}\label{sec:proof-permu}
The below result describes the shape of optimal transport plans for point clouds \textbf{of same size}, which are essentially permutation matrices. For the sake of curiosity, we also illustrate in \cref{fig:ot_dist} the optimal transport for point clouds of different sizes. However, in the case of different sized point clouds, the optimal transport plans distribute mass from single source points to multiple target points. We note that non-square transports seem to remain relatively sparse as well. This is in line with our empirical observations.

\begin{proposition}\label{thm:permu}
For $\X,\Y\in\mathcal{X}_{n,d}$ there exists a rescaled permutation matrix $\frac{1}{n}(\delta_{i\sigma(j)})_{1\leq i,j\leq n}$ which is an optimal transport plan, i.e. 
 \begin{equation}
     \Was_{\mathrm{L2}}(\X,\Y) =\dfrac{1}n\sum_{j=1}^n\Vert\x_{\sigma(j)}-\mathbf{y}_j\Vert_2^2
 \end{equation}
 \begin{equation}
     \Was_{\mathrm{dot}}(\X,\Y) =\dfrac{1}n\sum_{j=1}^n\langle\x_{\sigma(j)},\mathbf{y}_j\rangle.
 \end{equation}
\end{proposition}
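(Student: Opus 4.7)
The strategy is to recognize that computing $\Was_{\mathrm{L2}}(\X,\Y)$ (and similarly $\Was_{\mathrm{dot}}$) is a linear program over the coupling polytope $\mathcal{C}_{\X\Y}$, and then to identify the extreme points of that polytope in the equal-size case $n=m$.

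First I would observe that, when $n=m$, the set $\mathcal{C}_{\X\Y}$ consists exactly of matrices of the form $\T=\frac{1}{n}\mathbf{S}$ where $\mathbf{S}$ is doubly stochastic (entries in $[0,1]$, rows and columns each summing to $1$). Indeed, the marginal constraints $\T\mathbf{1}_n=\frac{1}{n}\mathbf{1}_n$ and $\T^\top\mathbf{1}_n=\frac{1}{n}\mathbf{1}_n$ translate directly into $\mathbf{S}$ being doubly stochastic. So $n\,\mathcal{C}_{\X\Y}$ is the Birkhoff polytope $\mathcal{B}_n$.

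Second, I would invoke the Birkhoff--von Neumann theorem, which asserts that the extreme points of $\mathcal{B}_n$ are precisely the $n!$ permutation matrices $P_\sigma=(\delta_{i\sigma(j)})_{ij}$, and consequently $\mathcal{C}_{\X\Y}$ is the convex hull of the rescaled permutation matrices $\frac{1}{n}P_\sigma$. Third, since the objectives
\[
\T\mapsto \sum_{ij} T_{ij}\Vert \x_i-\mathbf{y}_j\Vert_2^2 \quad\text{and}\quad \T\mapsto -\sum_{ij} T_{ij}\langle \x_i,\mathbf{y}_j\rangle
\]
are linear functionals on the compact convex polytope $\mathcal{C}_{\X\Y}$, by the fundamental theorem of linear programming each attains its minimum at some extreme point. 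Hence there exists a permutation $\sigma$ with $T^\star_{ij}=\frac{1}{n}\delta_{i\sigma(j)}$ realizing the optimum. Substituting this back into the defining sums collapses them to $\frac{1}{n}\sum_{j}\Vert \x_{\sigma(j)}-\mathbf{y}_j\Vert_2^2$ and $\frac{1}{n}\sum_j\langle \x_{\sigma(j)},\mathbf{y}_j\rangle$ respectively, which is exactly the claimed identity (after reindexing $i=\sigma(j)$).

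I do not anticipate a genuine obstacle here: the proof is essentially a two-line consequence of Birkhoff--von Neumann plus the observation that LPs attain their optimum at a vertex. The only mildly delicate point is the bookkeeping that rescales the Birkhoff polytope by $1/n$ to match the marginal constraints used in this paper, but that is routine and I would handle it explicitly at the start of the proof.
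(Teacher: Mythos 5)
Your proposal is correct and follows essentially the same route as the paper: identify the coupling polytope as the rescaled Birkhoff polytope, invoke Birkhoff--von Neumann for its extreme points, and note that a linear objective on a compact convex polytope is optimized at a vertex. The paper's proof is a slightly more compressed version of the same argument.
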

\begin{proof}
It is well known from Birkhoff's theorem that every squared doubly-stochastic matrix is a convex combination of permutation matrices. Since the Wasserstein cost for a given transport $\T$ is a linear function, it is also a convex/concave function, and hence it is maximized/minimized over the convex compact set of couplings at one of its extremal points, namely one of the rescaled permutations, yielding the desired result.
\end{proof}

\section{Complexity} \label{apx:complexities}
\subsection{Wasserstein}
Computing the Wasserstein optimal transport plan between two point clouds consists in the minimization of a linear function under linear constraints. It can either be performed exactly by using network simplex methods or interior point methods as done by \citep{pele2009fast} in time $\Tilde{\mathcal{O}}(n^3)$, or approximately up to $\varepsilon$ via the Sinkhorn algorithm \citep{cuturi2013sinkhorn} in time $\Tilde{\mathcal{O}}(n^2/\varepsilon^3)$. More recently, \citep{dvurechensky2018computational} proposed an algorithm solving OT up to $\varepsilon$ with time complexity $\Tilde{\mathcal{O}}(\min\{n^{9/4}/\varepsilon,n^2/\varepsilon^2\})$ via a primal-dual method inspired from accelerated gradient descent.

In our experiments, we used the Python Optimal Transport (POT) library \citep{flamary2017pot}. We noticed empirically that the Earth Mover Distance (EMD) solver yielded faster and more accurate solutions than Sinkhorn for our datasets, because the graphs and point clouds were small enough ($< 30$ elements). As such, we our final models use EMD.

Significant speed up could potentially be obtained by rewritting the POT library for it to solve OT in batches over GPUs. In our experiments, we ran all jobs on CPUs.

\section{Further Experimental Details}\label{sec:experimental-details}

\paragraph{Model Sizes}

Using MPNN hidden dimension as 200, and the final output MLP hidden dimension as 100, the number of parameters for the models are given by \cref{tab:num_params}. The fingerprint used dimension was 2048, explaining why the MLP has a large number of parameters. The D-MPNN model is much smaller than GIN and GAT models because it shares parameters between layers, unlike the others. Our prototype models are even smaller than the D-MPNN model because we do not require the large MLP at the end, instead we compute distances to a few small prototypes (small number of overall parameters used for these point clouds). The  dimensions of the prototype embeddings are also smaller compared to the node embedding dimensions of the D-MPNN and other baselines. We did not see significant improvements in quality by increasing any hyperparameter value.

Remarkably, our model outperforms all the baselines using between 10 to 1.5 times less parameters.

\begin{table*}[h]
\centering
\begin{tabular}{ c|p{1.39cm}|p{0.9cm}|p{1.0cm}|p{1.25cm}||p{1.39cm}|p{0.9cm}|p{1.0cm}|p{1.25cm}| } 
  \cline{2-9} 
 & \multicolumn{4}{c||}{Average Train Time per Epoch (sec)} & \multicolumn{4}{c|}{Average Total Train Time (min)} \\
 \cline{2-9} 
  & D-MPNN & ProtoS & ProtoW & \begin{tabular}{@{}c@{}}ProtoW +\\ NC reg\end{tabular} & D-MPNN & ProtoS & ProtoW & \begin{tabular}{@{}c@{}}ProtoW +\\ NC reg\end{tabular} \\ 
 \hline
 \multicolumn{1}{|l|}{ESOL} & 5.5 & 5.4 & 13.5  & 31.7 & 4.3 & 5.3 & 15.4 & 30.6  \\ 
 \multicolumn{1}{|l|}{BACE} & 32.0 & 31.4 & 25.9  & 39.7 & 12.6 & 12.4 & 25.1 & 49.4\\
 \multicolumn{1}{|l|}{BBBP} & 41.8 & 42.4 & 51.2 & 96.4 & 12.6 & 13.2 & 31.5 & 62.2 \\
 \multicolumn{1}{|l|}{LIPO} & 33.3 & 35.5 & 27 & 69.4 & 28.6 & 31.8 & 74.7 & 146.9 \\
 \hline
\end{tabular}
\caption{Training times for each model and dataset  in \cref{tab:1}.}
\label{tab:runtimes}
\end{table*}

\begin{table}[h]
\begin{tabular}{ |c|c|c|c|c|c| } 
 \hline
  MLP & GIN & GAT & D-MPNN & ProtoS & ProtoW \\ 
 \hline
 401k & 626k & 671k & 100k & 65k & 66k  \\ 
 \hline
\end{tabular}
\caption{Number of parameters per each model in \cref{tab:1}. Corresponding hyperparameters are in \cref{sec:appendix-exp}.}
\label{tab:num_params}
\end{table}

\paragraph{Runtimes.}
We report the average total training time (number of epochs might vary depending on the early stopping criteria), as well as average  training epoch time for the D-MPNN and our prototype models in \cref{tab:runtimes}. We note that our method is between 1 to 7.1 times slower than the D-MPNN baseline which mostly happens due to the frequent calls to the Earth Mover Distance OT solver.

\subsection{Setup of Experiments} \label{sec:appendix-exp}

Each dataset is split randomly 5 times into 80\%:10\%:10\% train, validation and test sets. For each of the 5 splits, we run each model 5 times to reduce the variance in particular data splits (resulting in each model being run 25 times). We search hyperparameters described in \cref{tab:3} for each split of the data, and then take the average performance over all the splits. The hyperparameters are separately searched for each data split, so that the model performance is based on a completely unseen test set, and that there is no data leakage across data splits. The models are trained for 150 epochs with early stopping if the validation error has not improved in 50 epochs and a batch size of 16. We train the models using the Adam optimizer with a learning rate of 5e-4. For the prototype models, we use different learning rates for the GNN and the point clouds (5e-4 and 5e-3 respectively), because empirically we find that the gradients are much smaller for the point clouds. The molecular datasets used for experiments here are small in size (varying from 1-4k data points), so this is a fair method of comparison, and is indeed what is done in other works on molecular property prediction \cite{yang2019analyzing}.

\begin{table*}
  \centering
    \caption{The parameters for our models (the prototype models all use the same GNN base model), and the values that we used for hyperparameter search. When there is only a single value in the search list, it means we did not search over this value, and used the specified value for all models.}
  \label{tab:3}
  \vskip 0.10in
  \begin{tabular}{|l|c|l|}
  \hline
 Parameter Name & Search Values & Description \\  [0.3ex]
 \hline
 n$\_$epochs & $\{150\}$ & Number of epochs trained \\ [0.3ex]
 batch$\_$size & $\{16\}$ & Size of each batch \\ [0.3ex]
 lr & $\{$5\text{e}-4$\}$ & Overall learning rate for model \\ [0.3ex]
 lr$\_$pc & $\{$5\text{e}-3$\}$ & Learning rate for  the prototype embeddings \\ [0.3ex]
 n$\_$layers & $\{5\}$ & Number of layers in the GNN \\ [0.3ex]
 n$\_$hidden & $\{50, 200\}$ & Size of hidden dimension in GNN \\ [0.3ex]
 n$\_$ffn$\_$hidden & $\{1\text{e}2, 1\text{e}3, 1\text{e}4\}$ & Size of the output feed forward layer \\ [0.3ex]
 dropout$\_$gnn & $\{0.\}$ & Dropout probability for GNN \\ [0.3ex]
 dropout$\_$fnn & $\{0., 0.1, 0.2\}$ & Dropout probability for feed forward layer \\ [0.3ex]
 n$\_$pc (M)& $\{10, 20\}$ & Number of the prototypes (point clouds) \\ [0.3ex]
 pc$\_$size (N)& $\{10\}$ & Number of points in each prototype (point cloud) \\ [0.3ex]
 pc$\_$hidden (d)& $\{5, 10\}$ & Size of hidden dimension of each point in each prototype \\ [0.3ex]
 nc$\_$coef & $\{0., 0.01, 0.1, 1\}$ & Coefficient for noise contrastive regularization \\ [0.3ex]
 \hline
  \end{tabular}
  \vskip -0.10in
\end{table*}

\subsection{Baseline models} \label{sec:appendix-baseline}

Both the GIN \citep{xu2018powerful} and GAT \citep{velivckovic2017graph} models were originally used for graphs without edge features. Therefore, we adapt both these algorithms to our use-case, in which edge features are a critical aspect of the prediction task. Here, we expand on the exact architectures that we use for both models. 

First we introduce common notation that we will use for both models. Each example is defined by a set of vertices and edges $(V, E)$. Let $v_i \in V$ denote the $i$th node in the graph, and let $e_{ij} \in E$ denote the edge between nodes $(i, j)$. Let $h_{v_i}^{(k)}$ be the feature representation of node $v_i$ at layer $k$. Let $h_{e_{ij}}$ be the input features for the edge between nodes $(i, j)$, and is static because we do updates only on nodes. Let $N(v_i)$ denote the set of neighbors for node $i$, not including node $i$; let $\hat{N}(v_i)$ denote the set of neighbors for node $i$ as well as the node itself.

\textbf{GIN} 

The update rule for GIN is then defined as:

\begin{equation}
    h_{v_i}^k = \text{MLP}^{(k)} \big( (1+\epsilon^{(k)}) + \sum_{v_j \in N(v_i)} [h^{(k-1)}_u + W_b^{(k)} h_{e_{ij}}] \big) 
\end{equation}

As with the original model, the final embedding $h_G$ is defined as the concatenation of the summed node embeddings for each layer.

\begin{equation}
    h_G = \text{CONCAT}\Big[ \sum_{v_i} h_{v_i}^{(k)} | k = 0, 1, 2 ... K \Big]
\end{equation}

\textbf{GAT}

For our implementation of the GAT model, we compute the attention scores for each pairwise node $\alpha^{(k)}_{ij}$ as follows.

\begin{equation}
\alpha^{(k)}_{ij} = \frac{\exp f(v_i, v_j)}{\sum_{v_{j'} \in \hat{N}(v_i)} \exp f(v_i, v_{j'}) }, \quad \text{where}
\end{equation}

\begin{equation*}
    f(v_i, v_j) = \sigma(a^{(k)} \Big[W_1^{(k)} h_{v_i}^{(k-1)} || W_2^{(k)} [h_{v_j}^{(k-1)} + W_b^{(k)} h_{e_{ij}} ]\Big])
\end{equation*}

where $\{W_1^{(k)}, W_2^{(k)}, W_b^{(k)}\}$ are layer specific feature transforms, $\sigma$ is LeakyReLU, $a^{(k)}$ is a vector that computes the final attention score for each pair of nodes. Note that here we do attention across all of a node's neighbors as well as the node itself.

The updated node embeddings are as follows:

\begin{equation}
    h_{v_i}^{k} = \sum_{v_j \in \hat{N}(v_i)} \alpha^{(k)}_{i,j}h_{v_j}^{(k-1)}
\end{equation}

The final graph embedding is just a simple sum aggregation of the node representations on the last layer ($h_G = \sum_{v_i} h_{v_i}^{K}$). As with \citep{velivckovic2017graph}, we also extend this formulation to utilize multiple attention heads.

\section{What types of molecules do prototypes capture ?} \label{apx:interp}

To better understand if the learned prototypes offer interpretability, we examined the ProtoW-Dot model trained with NC regularization (weight 0.1). For each of the 10 learned prototypes, we computed the set of molecules in the test set that are closer in terms of the corresponding Wasserstein distance to this prototype than to any other prototype. While we noticed that one prototype is closest to the majority of molecules, there are other prototypes that are more interpretable as shown in \cref{fig:interpr}.

\begin{figure}[h]
    \centering
    \includegraphics[width=.5\textwidth]{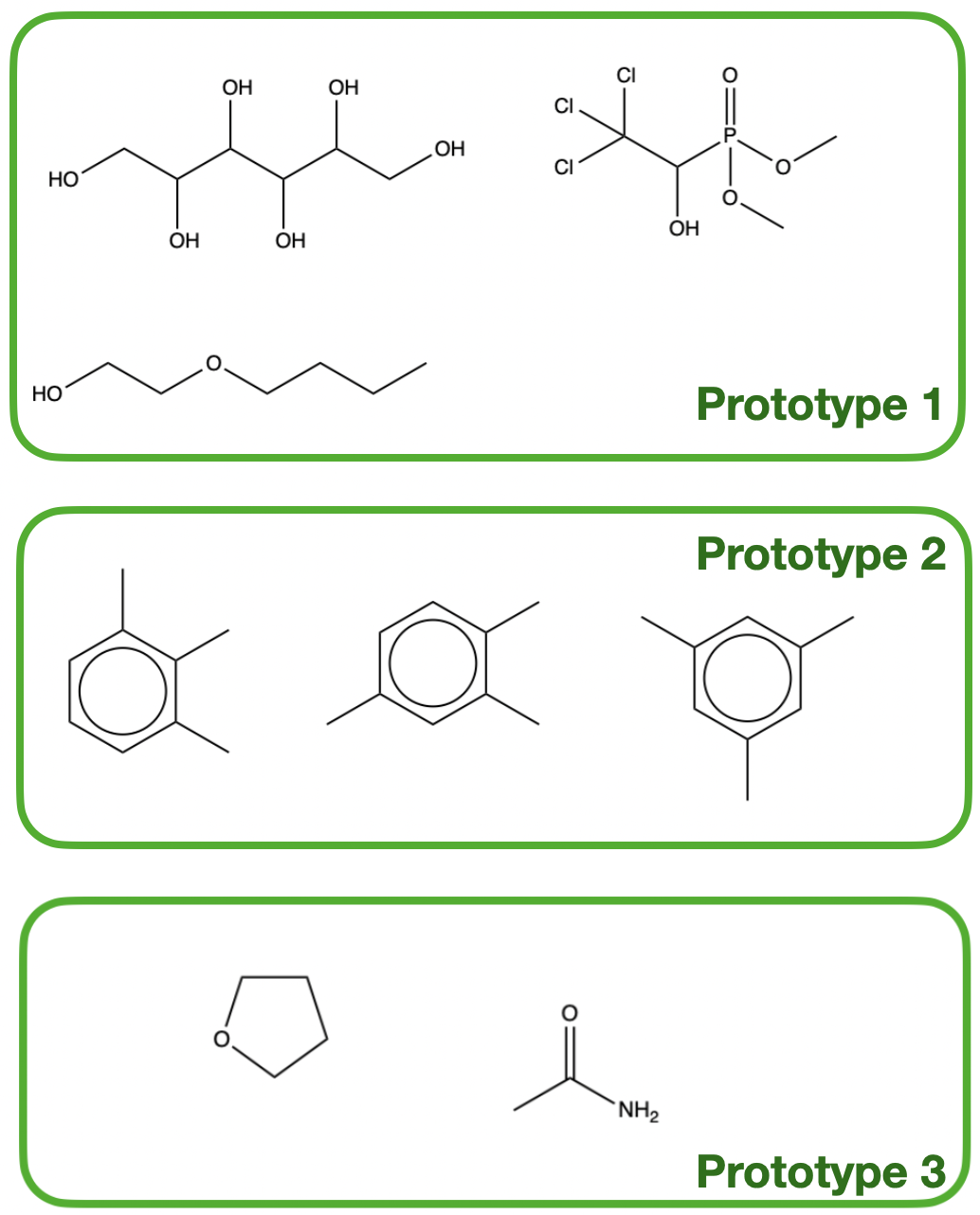}
    \caption{The closest molecules to some particular prototypes in terms of the corresponding Wasserstein distance. One can observe that some prototypes are closer to insoluble molecules containing rings (Prototype 2), while others prefer more soluble molecules (Prototype 1).  } 
    \label{fig:interpr}
\end{figure}

\bibliography{biblio}

\end{document}